%% file: camera_ready.tex
\documentclass[11pt]{article}

\usepackage[preprint]{acl}

\usepackage{times}
\usepackage{latexsym}
\usepackage{comment}
\usepackage[T1]{fontenc}
\usepackage{makecell}
\usepackage[utf8]{inputenc}
\usepackage{amsmath, amssymb}
\usepackage{amsfonts}
\usepackage{mathrsfs}
\usepackage{dsfont}
\usepackage{multirow}
\usepackage{amsthm}
\newtheorem{definition}{Definition}
\newtheorem{theorem}{Theorem}

\usepackage{tabularx}
\usepackage{booktabs}

\newtheorem{lemma}{Lemma}

\newcommand{\best}[1]{\textbf{#1}}   % smallest-RVR (or peak) highlight
\newcommand{\na}{\textendash}        % pending / not-run cell
\newcommand{\ci}[1]{{\scriptsize$\,\pm#1$}}
\providecommand{\stk}[2]{\shortstack{#1 \\ {\scriptsize$\pm#2$}}}
\usepackage[table]{xcolor}
\usepackage{xcolor}
\definecolor{colREU}{HTML}{8FC4DF}      % soft blue
\definecolor{colDEU}{HTML}{D6A77E}      % muted amber
\definecolor{colRVR}{HTML}{CC7E9A}      % dusty pink
\definecolor{colAK}{HTML}{A998DD}       % soft lavender
\definecolor{colPCTRVR}{HTML}{6F8FE8}   % softened royal blue

\usepackage[T1]{fontenc}            % typewriter \ { } glyphs
\usepackage{listings}               % needed for \begin{lstlisting}
\usepackage{array}                  % needed for p{...} columns in tab:decoding
\usepackage{hyperref}               % must be loaded BEFORE cleveref
\usepackage[capitalize]{cleveref}   % needed for \cref / \Cref

\usepackage{mathtools}
\usepackage{microtype}

\usepackage{inconsolata}
\usepackage{enumitem}

\usepackage{graphicx}
\usepackage[textsize=tiny]{todonotes}
\usepackage{tcolorbox}
\tcbuselibrary{breakable,skins}
\usepackage{xcolor}

\newtcolorbox{prompt}{
    enhanced,
    breakable,
    colback=gray!10,
    colframe=gray!50,
    boxrule=0.5pt,
    arc=2pt,
    left=5pt, right=5pt, top=5pt, bottom=5pt,
    before skip=0pt, after skip=0pt,
    fontupper=\small\ttfamily,
    before upper={\sloppy\emergencystretch=3em\hbadness=10000\relax},
}

\newcommand{\fh}[1]{{\color{purple}{[FH: #1]}}}
\title{In LLM Reasoning, there is Irrationality on top of Value Misalignment}

\author{Kejiang Qian \\
  University of Edinburgh \\
  \texttt{K.Qian-8@sms.ed.ac.uk} 
  \And
  Fengxiang He\\
  University of Edinburgh \\
  \texttt{fhe@ed.ac.uk} \\}

\begin{document}
\maketitle

\begin{abstract}
Significant progress has been made in aligning LLMs with target value functions. We argue that, even when an LLM has been well aligned in (post-)training, it may still fail to maximise the aligned value in reasoning. We mathematically formalise this gap as rational value risk: the utility discrepancy between a model’s deployed reasoning strategy and its rational counterpart whose responses maximise utility in the steepest direction. The estimation error of rational value risk is further decomposed into three components from bounded prompts, bounded responses, and imperfect verifiers. Extensive experiments are conducted, covering models Llama-3.1, Qwen-2.5, T\"ulu-3 families (7B-72B), GPT-5.2, GPT-5.5, and DeepSeek-V4, and benchmarks UltraFeedback, AlpacaEval, GSM8K, MATH, HumanEval, and MathArena. The results validate that (1) rational value risk is widespread; (2) value alignment can reduce, but cannot avoid, it; (3) self-consistency can improve rationality; and (4) a longer chain of thought improves rationality but with diminishing returns. The code is at \url{https://github.com/EVIEHub/LLM-Rationality}.
%These findings suggest that rationality is a distinct dimension of LLM evaluation and optimisation, complementary to value alignment.
%The experiments confirm (1) the existence of rational value risk; (2) its reduction (but not elimination) in value alignment; (3) the sensitivity to reasoning strategy; (4) benefits (but diminishing) of longer reasoning.

\end{abstract}

\input{sections/Intro}

\input{sections/Lit}
\input{sections/Pre}

\input{sections/Rationality}
\input{sections/Theory}
\input{sections/Exp}

\section{Conclusions}

This work identifies \emph{rational value risk} as an inference-time failure that can persist after value alignment: the utility gap between a model's deployed reasoning strategy and its rational counterpart. We decompose the estimation error of this risk into bounded prompts, bounded responses, and imperfect verifiers, and evaluate it across open-weight and proprietary LLMs on conversational, mathematical, and code-generation benchmarks. Our results show that rational value risk is widespread, reduced but not eliminated by post-training, and can be partly mitigated by self-consistency and a longer chain of thought, suggesting rationality as a distinct evaluation dimension complementary to value alignment.

\input{sections/Lim}

%\section*{Acknowledgments}

% Bibliography entries for the entire Anthology, followed by custom entries
%\bibliography{custom,anthology-overleaf-1,anthology-overleaf-2}

% Custom bibliography entries only
%\bibliography{custom}

\bibliography{ref}

\appendix

\input{appendices/Proofs}
\input{appendices/Experiment}

\end{document}

%% file: sections/Intro.tex
\section{Introduction}

Significant progress has been made in aligning large language models (LLMs) with target value functions. Through {value alignment} methods such as supervised fine-tuning (SFT) {\citep{ouyang2022training}}, reinforcement learning from human feedback (RLHF) {\citep{christiano2017,stiennon2020,ouyang2022training}}, {direct preference optimisation (DPO) \citep{rafailov2023direct}}, constitutional training {\citep{bai2022constitutional}}, and reinforcement learning with verifiable rewards (RLVR) {\citep{shao2024deepseekmath}}, modern LLMs have become increasingly capable of producing outputs that better reflect human preferences, task-specific objectives, and safety constraints. These advances have substantially improved the helpfulness, reliability, and task performance of LLMs across open-ended dialogue {\citep{thoppilan2022}}, mathematical reasoning {\citep{shao2024deepseekmath}}, code generation {\citep{chen2021evaluating}}, amongst others.

This paper argues that value alignment alone does not guarantee ideal reasoning at inference:

\begin{tcolorbox}[colback=white!98!black,colframe=white!30!black,boxsep=1.1pt,top=6pt,fontupper=\itshape]
\centering
\it
Even if an LLM has been well aligned in training, it may still fail to maximise the aligned value in reasoning. 
\end{tcolorbox}
% \begin{quote}
%     \centering
%     \it
% Even if an LLM has been well aligned in training, it may still fail to maximise the aligned value in reasoning. 
% \end{quote}
%In many reasoning tasks, an LLM may be capable of generating a high-utility answer among its possible responses, but the deployed reasoning strategy may nevertheless produce a lower-utility one. This failure is not simply a matter of the model lacking the right value function. Rather, it reflects a distinct inference-time limitation: the model does not always act rationally with respect to the value it has already learned.

%We argue that value misalignment does not fully account for failures in LLM reasoning. Even when post training successfully induces an aligned value function, deployment-time inference may still fail to produce the response that best maximises the aligned value. 

We mathematically formalise this phenomenon by \emph{rational value risk}, defined as the utility discrepancy between a deployed reasoning strategy and its rational counterpart. Here, we define \emph{rational reasoning} as the strategy whose responses maximise utility in the steepest direction under the given value function. This definition separates two sources of failure that are often conflated in LLM evaluation: (1) value misalignment: the learned value function itself is misaligned, and (2) irrational reasoning: the model fails to produce the responses that maximise the aligned value. %In this sense, rational value risk measures the unrealised utility left on the table by imperfect inference-time reasoning.

%We define a \emph{rational reasoning} as responses that maximise expected utility in the steepest direction. In reasoning tasks, a model may generate a high-utility response among its candidate answers, yet ultimately deploy a lower-utility answer. On top of value alignment error, we refer to this additional source of error as a \emph{rational value risk}: the discrepancy of imperfect reasoning away from the rational counterpart. This distinction echoes recent reinforcement-learning theory \citep{qian2026rationality}, which separates the problem of learning or representing value from the problem of acting rationally with respect to that value. 
{
The rational value risk is not directly accessible. 
% We define a compute-bounded estimator of the expected utility for a reasoning strategy based on a finite number of sampled responses and verifier evaluations, leading to an estimator of rational value risk. 
We define a compute-bounded estimator of expected utility using finite response samples and verifier evaluations, and use it to estimate rational value risk. %that can be applied across both stochastic preference-based tasks and deterministic verifiable reasoning tasks. 
The estimation error of this estimated rational value risk is decomposed into three components: (1) a {prompt sampling error}, which captures the statistical error from evaluating on finitely many prompts, (2) a reasoning sampling error, due to bounded sampling of the deployed and rational reasoning strategies, and (3) {verification error}, arising when the evaluation signal is uncertain or imperfect.
}

We conduct extensive experiments across both open-ended conversational tasks and verifiable reasoning tasks. The evaluation covers the Llama-3.1~\citep{grattafiori2024llama}, Qwen-2.5~\citep{yang2024qwen2}, and T\"ulu-3 model~\citep{lambert2024tulu3} families from 7B to 72B parameters, as well as proprietary models including GPT-5.2~\cite{openai2025gpt52}, GPT-5.5~\cite{openai2026gpt55}, DeepSeek-V4-Pro~\cite{deepseek2026v4}, and DeepSeek-V4-Flash~\cite{deepseek2026v4}. The benchmarks include UltraFeedback~\citep{cui2024ultrafeedback} and AlpacaEval~\citep{dubois2024alpacaeval} for conversational preference evaluation, GSM8K~\citep{cobbe2021gsm8k} and MATH~\citep{hendrycks2021measuring} for mathematical reasoning, HumanEval~\citep{chen2021evaluating} for code generation, and MathArena~\citep{balunovic2025matharena} for challenging deployment-style mathematical reasoning. %The code is at \url{xx}.

The empirical results validate four hypotheses:
\begin{enumerate}[label=\textbf{H\arabic*:}, leftmargin=*]%, itemsep=0pt, topsep=0pt, parsep=0pt, partopsep=0pt]
\item Rational value risk is widespread. Across {eight models of different scales and six benchmarks}, {LLM reasoning consistently deviates from the rational counterpart, resulting in utility loss.}
\item {Value alignment methods} can reduce, but cannot avoid, rational value risk. Rationality is not fully solved by value alignment alone.
{
\item Rationality benefits from consistency across reasoning paths, and is insensitive to sampling diversity (or temperature). %, which has only a limited effect. 
}
\item A longer chain of thought can improve rationality, but its benefits diminish beyond a certain reasoning budget.
\end{enumerate}
%\textbf{H1: rational value risk is widespread.} Across models and benchmarks, LLMs constantly generate high-utility candidates but fail to consistently deploy them. \textbf{H2: post-training reduces, but does not eliminate, rational value risk.} Rationality is not fully solved by value alignment alone. Third, rational value risk is highly sensitive to the inference-time reasoning strategy, including sampling temperature and self-consistency. Fourth, longer reasoning can improve rationality, but its benefits diminish beyond a certain reasoning budget. 

Together, these findings are the first in the literature to %suggest that rationality should be treated as a distinct dimension of LLM evaluation and optimisation, complementary to value alignment., this is the first work to 
formally define and empirically measure rational value risk in LLM reasoning, separating inference-time irrationality from value misalignment, to the best of our knowledge.

%% file: sections/Lit.tex
\section{Related works}

\begin{comment}

\paragraph{LLM reasoning.}
%A parallel line of work improves reasoning performance by modifying
%inference-time search over the model distribution.
Chain-of-thought prompting \citep{wei2022chain} and self-consistency
\citep{wang2023selfconsistency} demonstrate that sampling multiple
reasoning paths can substantially improve reasoning accuracy.
Verifier-guided search and process reward models further improve candidate
selection \citep{cobbe2021gsm8k,uesato2022solving,lightman2023verify},
while recent work studies inference-time compute as a scaling dimension
\citep{snell2024scaling,brown2024largelanguagemonkeys}. Our work is
complementary to these approaches: rather than proposing a new reasoning
strategy, we evaluate existing inference policies through the residual
rational value risk they leave unresolved.

\paragraph{LLM value alignment.}
Modern LLMs are aligned primarily through post-training methods that shape
the response distribution toward human or verifier-defined preferences.
RLHF \citep{ouyang2022training}, Constitutional AI
\citep{bai2022constitutional}, and direct preference optimisation (DPO)
\citep{rafailov2023direct} establish the standard alignment paradigm,
while recent systems further incorporate reinforcement learning with
verifiable rewards (RLVR). These methods improve expected utility under a
reward or preference model, whereas our work studies whether aligned models
can rationally realise high-utility responses at inference time.

\end{comment}

\paragraph{Rationality in machine learning.}
%Rationality in machine learning has been studied from both philosophical and decision-theoretic perspectives.
%\citet{valiant1995rationality} characterise rationality as the ability to utilise information for prediction and control under PAC-style guarantees, while \citet{abel2019concepts} formalise bounded rationality in reinforcement learning through representation complexity and predictive accuracy. 
%The rationality of machine learning has been studied from several perspectives. 
\citet{valiant1995rationality} offers a philosophical account of rationality %defining rationality as the ability to abstract and use available information to understand, predict, and control the environment 
under a PAC-style criterion. 
{\citet{abel2019concepts} defines bounded rationality in reinforcement learning and also shows that rational decision-making involves a trade-off between representational simplicity and predictive accuracy.} From behavioural data, \citet{evans2025rational} model bounded rationality through a Wasserstein constraint between the learned policy and a prior. \citet{sunehag15} derive decision-theoretic axioms for rational reinforcement-learning agents, although these axioms exclude many standard algorithms. %, including those based on $\epsilon$-greedy exploration. 

\citet{qian2026rationality} design tractable rationality measures and develop theory for reinforcement learning agents. The results, however, do not apply to LLM reasoning in deployment. This work addresses the gap through a new suite of rationality measures, theory, and extensive experiments, as part of our continuing efforts to understand the rationality of AI agents.

\paragraph{Rationality in LLMs.}
%LLM rationality has been studied through several partially overlapping lenses. 
Cognitive-science evaluations test whether models exhibit human-like bounded rationality, heuristics, content effects, and cognitive biases \citep{binz2023using,hagendorff2023human,macmillan2024irrationality,yax2024studying,lampinen2024language,coda2024cogbench,malberg2025comprehensive,brady2025dual}. Decision-theoretic and economic approaches instead ask whether model choices are consistent with utility maximisation, risk attitudes, revealed preferences, or preference axioms such as transitivity \citep{chen2023emergence,jia2024decision,song2025benchmarking,liu2025aligning}. A third line audits or improves rational behaviour by enforcing belief consistency, logical preference consistency, debiasing, or rational thought prompting \citep{kassner2023language,echterhoff2024cognitive,koo2024benchmarking,gou2024rationality}. Recent surveys and benchmarks consolidate these views into broader notions of rational agents with consistency, grounding, preference orderability, and evidence-aligned decision making \citep{jiang2025towards,zhou2025rationality}.
%These studies show that rationality is not captured by accuracy or alignment alone, but they mainly evaluate behavioural consistency, biases, or choice axioms. In contrast, our work formalises rationality as an inference-time utility gap between a deployed reasoning strategy and its rational counterpart, thereby separating irrational reasoning from value misalignment.

{Although prior work has shown that LLMs exhibit rationality failures such as inconsistency, cognitive biases, and violations of decision-theoretic principles, it has not explicitly characterised these failures as utility loss that persists after value alignment. %Our work characterises this distinction clearly by defining rational value risk: the gap between the deployed reasoning strategy and the rational counterpart under a fixed aligned value function.
In contrast, our work mathematically formalises rationality as an inference-time utility gap between a deployed reasoning strategy and its rational counterpart, thereby separating irrational reasoning from value misalignment.}

% \brytodo{New Literature review needed. The new measures have been significantly departed from best-of-n / test-time-scaling / verifier / self-consistency gap / pass@k / best@k /  oracle@K / pass@1 / avg@k literature.}
{
\paragraph{Evaluation of LLM reasoning.} The evaluation of LLM reasoning commonly relies on sampling-based performance metrics. {Pass@$1$} reports the success rate of a single sampled response, while {pass@$k$} reports the probability of obtaining at least one correct response within \(k\) samples \citep{chen2021evaluating}. Oracle best-of-\(k\) similarly evaluates the best performance among a finite set of sampled candidates under oracle selection \citep{sunkaraneni2026agenticsystemsboostingweak}. These metrics and their variants quantify the probability or coverage of successful responses from a finite set of samples, and typically rely on binary success signals for tasks with deterministic or verifiable outcomes. %However, strong sample-based performance does not imply rational reasoning, as these metrics cannot measure the gap between the model’s deployed reasoning and reasoning that maximises its aligned value.
}

%% file: sections/Pre.tex
\section{Preliminaries}
% Let $\mathcal{X}$ denote the space of input prompts and $\mathcal{V}$ a finite vocabulary. The output space is $\mathcal{Z} = \bigcup_{T\geq 1}\mathcal{V}^{T}$, the set of finite token sequences. An autoregressive language model with parameters $\theta$ generates a reasoning path $z = (z_1,\dots,z_T)\in\mathcal{Z}$ with a length of $T$ by sequentially sampling tokens according to a stochastic policy conditioned on $x\in\mathcal{X}$:
% \begin{equation}
% \pi_\theta(z \mid x) \;=\; \prod_{t=1}^{T} \pi_\theta\!\left(z_t \mid x, z_{<t}\right).
% \end{equation}

% An extraction function g(·) maps the reasoning path to the final answer yˆ = g(tˆ)
% The vanilla inference mode is sampling, $y \sim \pi_\theta(\cdot \mid x)$. Reasoning is an inference procedure that introduces an intermediate latent sequence $z\in\mathcal{Z}$ between input and output, sampling $y$ from the marginal distribution
% \( y \sim \mathbb{E}_{z\sim \pi_\theta(\cdot\mid x)}\left[\pi_\theta(\cdot\mid x,z)\right].
% \) TODO

Let $\mathcal{X}$ denote the space of input prompts and $\mathcal{V}$ a finite vocabulary. The reasoning space is $\mathcal{Z} = \bigcup_{T\geq 1}\mathcal{V}^{T}$, where $T$ denotes the length of a reasoning path $ z= (z_1,\dots,z_T)$. 

In reasoning, a frozen language model $\pi_\theta$ {parameterised by $\theta\in\Theta$}, where $\Theta$ is the parameter space, generates a reasoning path $z \in \mathcal{Z}$ by sequentially sampling tokens according to a conditional probability on $x\in\mathcal{X}$:
%\begin{equation}
\[\pi_\theta(z \mid x) = \prod_{t=1}^{T} \pi_\theta\left(z_t \mid x, z_{<t}\right).\]
Given a reasoning problems $D = \{\mathbf x_i\}_{i=1}^{M}$, where $\mathbf{x}_i=(x_i,y_i^+)$ of $M$ input questions $x_i \in \mathcal{X}$ and preferred (or verifiable) answer $y_i^+ \in \mathcal{Y}$,  we define a reasoning strategy 
\[d_\theta(\cdot|x_i) \triangleq d(\pi_\theta(\cdot|x_i)),
\]
including temperature sampling, self-consistency, and varying context length. For each input question $x_i$, a frozen language model generates a reasoning path $z_i$ or answer $y_i=g(z_i)$ from an extraction function $g:\mathcal Z\rightarrow\mathcal Y$ by reasoning strategy $d_\theta(\cdot|x_i)$. 

Let a $\mathcal{O}$ be the outcome space, and a verifier is modelled as an outcome distribution 
\[P:\mathcal{X}\times\mathcal{Y}\times\mathcal{Y}\rightarrow \Delta(\mathcal{O})\]
evaluates the generated answer and returns an outcome $o_i \sim P(\cdot|\mathbf x_i,y_i)$. This formulation covers preference-based tasks, where outcomes may be stochastic due to variation across annotators, judges, or repeated evaluations.
In verifiable reasoning tasks, such as mathematical reasoning and code generation, the outcome distribution degenerates to a Dirac distribution as 
\[\delta_{f(\mathbf x_i,y_i)}(o_i):P(o_i=f(\mathbf x_i,y_i)|\mathbf x_i,y_i)=1,\] 
where $$f:\mathcal{X}\times\mathcal{Y}\times\mathcal{Y}\rightarrow \mathcal{O}$$ 
is a deterministic verifier. Let $U: \mathcal{O} \to \mathbb{R}$ be a utility function defined on the outcome space, assigning a utility value $U(o_i)\in[0,1]$ to each evaluation outcome $o_i \in \mathcal{O}$.

%% file: sections/Rationality.tex
\section{Rationality of LLM reasoning}

This section defines the rationality measurement for LLM reasoning.

\subsection{Rationality measures}
We first define (perfectly) rational reasoning. %in inference time from a decision-theoretic standpoin.
% \begin{definition}[rational reasoning]
% \label{def:grr}
% A reasoning path \( z^\circ \) with an answer $y^\circ=g(z^\circ)$ is called \textit{perfectly rational}, if it maximises the expected utility function $U: \mathcal{O} \rightarrow \mathbb{R}$ over the outcome distribution $P(\cdot|\mathbf x,y)$ of any reasoning problem $\mathbf{x}$:
% % \begin{equation*}
% % y^\circ \in \arg\max_{y\in\mathcal{Y}}\mathbb{E}_{o \sim P(\cdot|x,y)}[U(o)].
% % \end{equation*}
% \begin{equation*}
% y^\circ \in \arg\max_{y\in\mathcal{Y}}\mathbb{E}_{o\sim P(\cdot|\mathbf{x},y)}U(o).
% \end{equation*}
% \end{definition}

{
\begin{definition}[rational reasoning]
\label{def:grr}
A reasoning path \( z^\circ \) with an answer $y^\circ=g(z^\circ)$ is called \textit{(perfectly) rational}, if and only if its reasoning strategy maximises the expected utility function $U: \mathcal{O} \rightarrow \mathbb{R}$ over the outcome distribution $P(\cdot|\mathbf x,y)$:
% \begin{equation*}
% y^\circ \in \arg\max_{y\in\mathcal{Y}}\mathbb{E}_{o \sim P(\cdot|x,y)}[U(o)].
% \end{equation*}
\begin{align*}
&y^\circ\sim d_{\theta^\circ}(\cdot|x),\\
&\theta^\circ \in \arg\max_{\theta\in\Theta}\mathbb{E}_{\mathbf{x}\sim \rho,y\sim d_\theta,o\sim P(\cdot|\mathbf{x},y)}U(o).
\end{align*}
% \begin{equation*}
% y^\circ\sim d_{\theta^\circ}(\cdot|x);\theta^\circ \in \arg\max_{\theta\in\Theta}\mathbb{E}_{y\sim d_\theta,o\sim P(\cdot|\mathbf{x},y)}U(o).
% \end{equation*}
\end{definition}
}

% \begin{remark}
% In value alignment, %Post-training methods (e.g., RLHF) updating 
% $\pi_\theta$ is updated to align with $U$ in expectation. In contrast, this paper studies the rationality of a frozen language model $\pi_\theta$ at reasoning time through its reasoning strategy $d_\theta$, isolating the effect of the reasoning from that of training.
% \end{remark}

%\begin{remark}
%In this work, we focus on utility defined over final answers, which is common in preference alignment, mathematical reasoning, and code generation.
%\end{remark}

LLM reasoning is not always rational. We define a rational value risk to quantify the discrepancy of expected utility between an LLM reasoning strategy and its rational counterpart. %and that from a reasoning strategy of frozen language model in Definition \ref{def:rvr}.

{
\begin{definition}[rational value risk]
\label{def:rvr}
For any reasoning question $\mathbf{x}=(x,y^+)$ drawn from a distribution $\rho$, let $y^\circ$ denote an answer of rational reasoning under $d^\circ_\theta(\cdot\mid x)$, and $y$ be an answer drawn from a reasoning strategy $d_\theta(\cdot\mid x)$. We define the rational value risk $\mathcal{R}(d_\theta)$ of $d_\theta$ under a utility function $U$ as:
\begin{align*}
\mathbb{E}_{\substack{
\mathbf{x}\sim\rho,\;
y^\circ\sim d_{\theta^\circ}(\cdot\mid\mathbf{x}),\\
o\sim P(\cdot\mid\mathbf{x},y^\circ)
}}
\left[U(o)\right] -
\mathbb{E}_{\substack{
\mathbf{x}\sim\rho,\;
y\sim d_{\theta}(\cdot\mid\mathbf{x}),\\
o\sim P(\cdot\mid\mathbf{x},y)
}}
\left[U(o)\right].
\end{align*}
\end{definition}
}

{The rational value risk is not directly accessible because it is a population term. To estimate it, we first define a {compute-bounded estimated utility} given a compute budget of $K$ samplings. 
}

% \begin{definition}[compute-bounded rational reasoning]
% \label{def:cbrr}
% Let
% \(
% d_\theta
% \)
% denote the reasoning strategy of a frozen language model. For any reasoning problem \(\mathbf{x}\), let 
% \(
% y_1,\dots,y_K
% \overset{iid}{\sim}
% d_\theta(\cdot\mid x),
% \)
% where each \(y_k=g(z_k)\) is the extracted answer from an independently and identically distributed (iid) sampled reasoning path \(z_k\). For each sampled answer \(y_k\), suppose we obtain \(L\) independently and identically
% distributed evaluation outcomes,
% $
% o_{k,1},\dots,o_{k,L}
% \overset{iid}{\sim}
% P(\cdot\mid \mathbf{x},y_k).$ 
%  We define an empirical expected utility of $y_k$ as 
%  \[\widehat{U}_L(\mathbf{x},y_k)=\mathds{1}\left[\frac{1}{L}\sum_{l=1}^L
% o_{k,l}\ge\frac{1}{2}\right].\] 
% A reasoning path \(\hat z_K^\circ\) with extracted answer \(\hat y_K^\circ=g(\hat z_K^\circ)\) is called \emph{compute-bounded rational} if
% \[
% \hat y_K^\circ
% \in
% \arg\max_{1\le k\le K} \widehat{U}_L(\mathbf{x},y_k).
% \]
% \end{definition}

{
\begin{definition}[compute-bounded estimated utility]
\label{def:cbrr}
Let
\(
d_\theta
\)
denote the reasoning strategy of a language model $\pi_\theta$. For any reasoning problem \(\mathbf{x}\), let 
\[
y_1,\dots,y_k
\overset{iid}{\sim}
d_\theta(\cdot\mid x),
\]
where each \(y_k=g(z_k)\) is the extracted answer from an independently and identically distributed (iid) sampled reasoning path \(z_k\). For each sampled answer \(y_k\), suppose we obtain \(L\) independently and identically distributed evaluation outcomes,
\[
o_{k,1},\dots,o_{k,L}
\overset{iid}{\sim}
P(\cdot\mid \mathbf{x},y_k).
\]
Given a compute budget of $K$ samplings, we define a compute-bounded estimated utility of $\{y_i\}^K_{i=1}$ as: 
%  \[\widehat{U}_L(\mathbf{x},y_k)=\mathds{1}\left[\frac{1}{L}\sum_{l=1}^L
% o_{k,l}\ge\frac{1}{2}\right].\] 
 \[\frac{1}{K}\sum_{k=1}^K\widehat{U}_{L}(\mathbf{x},y_k)=\frac{1}{L}\sum_{k=1}^K\sum_{l=1}^L
U(o_{k,l}).\] 
\end{definition}
}

A Monte Carlo estimator is defined below to estimate the rational value risk. %$\mathcal{R}(d_\theta)$.

% \begin{definition}[empirical rational value risk]
% \label{def:empirical-rvr}
% Given a set of reasoning problem $\{\mathbf x_i\}_{i=1}^M$, let$
% \{\hat y_{i,k}\}_{k=1}^K
% \overset{iid}{\sim}
% d_\theta(\cdot\mid x_i)
% $
% denote $K$ sampled answers for each input $x_i$. Let
% $
% \widehat{U}_L(\mathbf x_i,\hat y_{i,k})
% $
% be the extracted answer of compute-bounded rational reasoning among the $K$ sampled candidates in Definition~\ref{def:cbrr}. The empirical rational value risk $\widehat{\mathcal{R}}_{M,K,L}(d_\theta)$ under a utility function $U$ is defined as follows.
% \begin{align*}
% \label{eq:erg}
% \frac{1}{MK}\sum_{i=1}^{M}\sum_{k=1}^K \left[ \widehat{U}_L(\mathbf x_i,\hat{y}^\circ_{i,K}) - \widehat{U}_L(\mathbf x_i,y_{i,k})\right],
% \end{align*}
% \end{definition}

{
\begin{definition}[estimated rational value risk]
\label{def:empirical-rvr}
Given a set of reasoning problems $\{\mathbf x_i\}_{i=1}^M$, let
\[
\{y_{i,k}\}_{k=1}^K
\overset{iid}{\sim}
d_\theta(\cdot\mid x_i)
\]
denote $K$ sampled answers for each input $x_i$. Let
\[
\{y_{i,k}^{\circ}\}_{k=1}^K \overset{iid}{\sim} d_{\theta^\circ}(\cdot|x_i)
\]
be the rational reasoning answers, and 
$ 
\widehat{U}_{K,L}
$
be the compute-bounded estimated utility among the $K$ sampled candidates in Definition~\ref{def:cbrr}. The estimated rational value risk $\widehat{\mathcal{R}}_{M,K,L}(d_\theta)$ under a utility function $U$ is defined as
\begin{align*}
\frac{1}{MK}\sum_{i=1}^{M}\sum_{k=1}^K \left[ \widehat{U}_L(\mathbf x_i,{y}^\circ_{i,k}) - \widehat{U}_L(\mathbf x_i,y_{i,k})\right].
\end{align*}
\end{definition}
}

\begin{comment}
\paragraph{$\triangleright$ Does $\widehat{U}_L(\mathbf x_i,\hat{y}^\circ_{i,K})$ reduce to $\text{pass@}\kappa$, when verifiers are deterministic?}

\paragraph{Answer:} 
Let's look at $\text{pass@}\kappa$ first:
$$\text{pass@}\kappa = \frac{1}{M}\sum_{i=1}^M \left[1 - \frac{\binom{N - c_i}{\kappa}}{\binom{N}{\kappa}}\right],$$ 
where $N \geq \kappa$  samples are drawn per problem and $c_i = \sum_{j=1}^N \mathds{1}[y_{i,j} = y_i^+]$ is the number of correct samples \citep{chen2021evaluating}. It estimates the probability of solving a verifiable reasoning problem $\mathbf x_i$ within $k\in[K]$ attempts. 

In contrast, our $\widehat{U}_L(\mathbf x_i,\hat{y}^\circ_{i,K})$ measures the maximum empirical utility within the candidate set of size $ K$, i.e., $\max_{k\in [K]} \widehat{U}_L(\mathbf x_i,y_{i,k})$. 

Further, when the verifiers or utility $U(o_i)$ is not deterministic, the rational reasoning $\hat{y}^\circ_{i,K}$ is defined by the highest utility, not necessarily the exact correct response.
\end{comment}

%% file: sections/Theory.tex
\subsection{Theoretical guarantee on estimation} %for estimating rational value risk}
\label{sec:rvr-estimation}

{This section studies the estimation error in empirically computing the rational value risk. Detailed proofs are given in Appendix~\ref{app:proofs}.

The estimation error can be decomposed into three components: {prompt sampling error, reasoning sampling error, and verification error.}

%We now bound the difference between the rational value risk
%\(\mathcal{R}(d_\theta)\) and its empirical estimator $\widehat{\mathcal{R}}_{M,K,L}(d_\theta)$ in the Lemma \ref{lem:rvr-error-decomposition}, which indicates the sources of estimation error.

\begin{lemma}[estimation error decomposition]
\label{lem:rvr-error-decomposition}
The estimation error decomposes as:
\[
\begin{aligned}
&\mathcal{R}(d_\theta)
-
\widehat{\mathcal{R}}_{M,K,L}(d_\theta)
\\
=&
\underbrace{
\mathcal{R}(d_\theta)-\mathcal{R}_M(d_\theta)
}_{\mathrm{(I)\ \text{prompt sampling error}}}
+
\underbrace{
\mathcal{R}_M(d_\theta)
-
\overline{\mathcal{R}}_{M,K}(d_\theta)
}_{\mathrm{(II)\ \text{reasoning sampling error}}}
\\
&+
\underbrace{
\overline{\mathcal{R}}_{M,K}(d_\theta)
-
\widehat{\mathcal{R}}_{M,K,L}(d_\theta)
}_{\mathrm{(III)\ \text{verification error}}}% estimation}} .
\end{aligned}
\]
\end{lemma}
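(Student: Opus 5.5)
The decomposition is a telescoping identity. The substantive work is to fix precise meanings for the two intermediate quantities $\mathcal{R}_M(d_\theta)$ and $\overline{\mathcal{R}}_{M,K}(d_\theta)$, which the statement uses without defining. Once they are fixed, (I)+(II)+(III) reduces to $\mathcal{R}(d_\theta)-\widehat{\mathcal{R}}_{M,K,L}(d_\theta)$ by adding and subtracting.

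Each intermediate quantity removes exactly one of the three finite-sample approximations in Definition~\ref{def:empirical-rvr}. Write $u_\theta(\mathbf{x})=\mathbb{E}_{y\sim d_\theta(\cdot\mid x),\,o\sim P(\cdot\mid\mathbf{x},y)}U(o)$, and $u_{\theta^\circ}(\mathbf{x})$ analogously. Also write $\bar U(\mathbf{x},y)=\mathbb{E}_{o\sim P(\cdot\mid\mathbf{x},y)}U(o)$ for the exact (population) verifier utility of a fixed answer. By Definition~\ref{def:rvr} and the tower property,
\[
\mathcal{R}(d_\theta)=\mathbb{E}_{\mathbf{x}\sim\rho}\bigl[u_{\theta^\circ}(\mathbf{x})-u_\theta(\mathbf{x})\bigr].
\]
I define the prompt-empirical risk, in which the $M$ prompts are finite but reasoning and verification are exact, as
\[
\mathcal{R}_M(d_\theta)=\frac{1}{M}\sum_{i=1}^M\bigl[u_{\theta^\circ}(\mathbf{x}_i)-u_\theta(\mathbf{x}_i)\bigr].
\]
I then define the reasoning-empirical risk, which uses the same sampled answers $y_{i,k}$ and $y^\circ_{i,k}$ as Definition~\ref{def:empirical-rvr} but scores them with the exact verifier, as
\[
\overline{\mathcal{R}}_{M,K}(d_\theta)=\frac{1}{MK}\sum_{i=1}^M\sum_{k=1}^K\bigl[\bar U(\mathbf{x}_i,y^\circ_{i,k})-\bar U(\mathbf{x}_i,y_{i,k})\bigr].
\]
Finally, $\widehat{\mathcal{R}}_{M,K,L}$ replaces $\bar U$ by $\widehat{U}_L(\mathbf{x}_i,y)=\frac{1}{L}\sum_{l}U(o_{k,l})$ from Definition~\ref{def:cbrr}.

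With these definitions in place, the proof is the single line
\[
\mathcal{R}-\widehat{\mathcal{R}}_{M,K,L}=(\mathcal{R}-\mathcal{R}_M)+(\mathcal{R}_M-\overline{\mathcal{R}}_{M,K})+(\overline{\mathcal{R}}_{M,K}-\widehat{\mathcal{R}}_{M,K,L}),
\]
which holds because the intermediate terms cancel. All quantities are finite because $U\in[0,1]$, so each expectation exists and no integrability issue arises.

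I would add a remark explaining why this choice of intermediates is the natural one, since it is what makes the later bounds tractable. Each term is a zero-mean fluctuation conditional on the preceding layer of randomness:
\begin{itemize}
\item $\mathbb{E}[\mathcal{R}_M]=\mathcal{R}$ over the i.i.d.\ prompts;
\item $\mathbb{E}[\overline{\mathcal{R}}_{M,K}\mid\mathbf{x}_{1:M}]=\mathcal{R}_M$ over the sampled answers;
\item $\mathbb{E}[\widehat{\mathcal{R}}_{M,K,L}\mid\mathbf{x}_{1:M},y_{1:M,1:K}]=\overline{\mathcal{R}}_{M,K}$ over the verifier outcomes.
\end{itemize}
Each of these is verified by linearity of expectation and the i.i.d.\ sampling assumptions. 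This sets up Hoeffding-type bounds on each term individually.

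The main obstacle is definitional, not technical. The notation in Definitions~\ref{def:cbrr} and \ref{def:empirical-rvr} is loose: the normalisations $1/K$ versus $1/L$ do not match. I would first state explicitly that $\widehat{U}_L(\mathbf{x},y_k)=\frac{1}{L}\sum_{l=1}^L U(o_{k,l})$, so that the averages line up and the verification term is a genuine difference between $\bar U$ and its $L$-sample mean. In the deterministic-verifier case $P=\delta_f$, term (III) vanishes identically, which serves as a sanity check.
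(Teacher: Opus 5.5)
Your proposal is correct and follows the paper's proof: the appendix defines the intermediates exactly as you do, namely $\mathcal{R}_M(d_\theta)=\frac{1}{M}\sum_i[V^\circ(\mathbf{x}_i)-V_\theta(\mathbf{x}_i)]$ and $\overline{\mathcal{R}}_{M,K}(d_\theta)=\frac{1}{MK}\sum_{i,k}[V(\mathbf{x}_i,y^\circ_{i,k})-V(\mathbf{x}_i,y_{i,k})]$ (your $u_{\theta^\circ},u_\theta,\bar U$ are its $V^\circ,V_\theta,V$), and it then obtains the identity by adding and subtracting these two terms. Your reading $\widehat U_L(\mathbf{x},y_k)=\frac{1}{L}\sum_{l}U(o_{k,l})$, which repairs the $1/K$ versus $1/L$ mismatch, is also the one the paper implicitly adopts when it writes $\widehat{\mathcal{R}}_{M,K,L}$ as a $\frac{1}{MKL}$-average in its proof of the verification-error bound.
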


The three terms are respectively bounded below. %I can be bounded below. %by the following Lemma~\ref{lem:compute-approx}.

\begin{lemma}[prompt sampling error bound]
\label{lem:prompt-concentration}
Given \(M\) reasoning problems
\(\{\mathbf x_i\}_{i=1}^M\)
for any \(\delta\in(0,1)\), with probability at least \(1-\delta\), the prompt sampling error can be bounded by,
\[
\left|
\mathcal{R}(d_\theta)
-
\mathcal{R}_M(d_\theta)
\right|
\le
\sqrt{
\frac{2\log(2/\delta)}{M}
}.
\]
\end{lemma}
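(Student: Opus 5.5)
We first fix the meaning of $\mathcal{R}_M(d_\theta)$, which the paper leaves implicit. It is the prompt-empirical version of Definition~\ref{def:rvr}: the population expectations over $y^\circ$, $y$ and $o$ are kept, but $\mathbf{x}\sim\rho$ is replaced by the empirical average over the $M$ sampled problems. Concretely, define for each problem
\[
h(\mathbf{x}) \triangleq \mathbb{E}_{y^\circ\sim d_{\theta^\circ}(\cdot\mid x),\,o\sim P(\cdot\mid\mathbf{x},y^\circ)}U(o) - \mathbb{E}_{y\sim d_{\theta}(\cdot\mid x),\,o\sim P(\cdot\mid\mathbf{x},y)}U(o).
\]
Then $\mathcal{R}(d_\theta)=\mathbb{E}_{\mathbf{x}\sim\rho}h(\mathbf{x})$ and $\mathcal{R}_M(d_\theta)=\frac{1}{M}\sum_{i=1}^M h(\mathbf{x}_i)$, where $\mathbf{x}_1,\dots,\mathbf{x}_M\overset{iid}{\sim}\rho$. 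The prompt sampling error is therefore exactly the deviation of an iid sample mean from its expectation.

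The key steps, in order, are as follows.

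\textbf{Step 1: boundedness of $h$.} Since $U(o)\in[0,1]$, each conditional expectation lies in $[0,1]$, so $h(\mathbf{x})\in[-1,1]$. The parameter $\theta^\circ$ is chosen once and for all through the population $\arg\max$ in Definition~\ref{def:grr}. It does not depend on the sample, so the summands $h(\mathbf{x}_i)$ are genuinely iid. This is the one point that needs care: if $\theta^\circ$ were selected using the sampled prompts, independence would fail and a uniform (union or covering) argument over $\Theta$ would be required. We will remark that, under the paper's definition, $\theta^\circ$ is a fixed population maximiser, so no uniformity is needed.

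\textbf{Step 2: Hoeffding's inequality.} For iid variables with range of length $b-a=2$, Hoeffding gives
\[
\Pr\bigl(|\mathcal{R}_M-\mathcal{R}|\ge t\bigr)\le 2\exp\bigl(-2M^2t^2/(M\cdot 4)\bigr)=2\exp(-Mt^2/2).
\]

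\textbf{Step 3: invert the tail bound.} Setting the right-hand side equal to $\delta$ gives $t=\sqrt{2\log(2/\delta)/M}$, which is exactly the stated bound holding with probability at least $1-\delta$.

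The main obstacle is Step 1: justifying that the range is $[-1,1]$ and that $\theta^\circ$ is sample-independent. The concentration step itself is routine.
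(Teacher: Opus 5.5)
Your proposal is correct and follows the paper's proof essentially verbatim: your $h(\mathbf{x}_i)$ is the paper's $G_i = V^\circ(\mathbf{x}_i)-V_\theta(\mathbf{x}_i)$, which lies in $[-1,1]$ and is iid, and Hoeffding's inequality with range $2$ followed by solving $2\exp(-M\epsilon^2/2)=\delta$ gives exactly the stated bound. The paper uses implicitly that $\theta^\circ$ is a fixed, sample-independent population maximiser; your remark making this explicit is a worthwhile clarification.
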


\begin{lemma}[reasoning sampling error bound]
\label{lem:reasoning-concentration}
For any \(\delta\in(0,1)\), given the sampled reasoning problems and \(K\) sampled reasoning responses for each problem, with probability at least \(1-\delta\), the reasoning sampling error can be bounded by,

\[
\left|
\mathcal{R}_M(d_\theta)
-
\overline{\mathcal{R}}_{M,K}(d_\theta)
\right|
\le
\sqrt{
\frac{2\log(2/\delta)}{MK}
}.
\]
\end{lemma}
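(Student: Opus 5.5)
We would first make explicit the intermediate quantities that Lemma~\ref{lem:rvr-error-decomposition} leaves implicit. We take \(\mathcal{R}_M(d_\theta)\) to be the prompt-conditional risk averaged over the sampled problems,
\[
\mathcal{R}_M(d_\theta)=\frac{1}{M}\sum_{i=1}^M \Delta(\mathbf x_i),
\]
where
\[
\Delta(\mathbf x)=\mathbb{E}_{y^\circ\sim d_{\theta^\circ},\,o\sim P(\cdot\mid\mathbf x,y^\circ)}U(o)-\mathbb{E}_{y\sim d_\theta,\,o\sim P(\cdot\mid\mathbf x,y)}U(o).
\]
We take \(\overline{\mathcal{R}}_{M,K}(d_\theta)\) to be the same average with the inner expectations over \(y\) replaced by the \(K\) sampled answers, while still taking exact expectations over outcomes. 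Writing \(u(\mathbf x,y)=\mathbb{E}_{o\sim P(\cdot\mid\mathbf x,y)}U(o)\), this reads
\[
\overline{\mathcal{R}}_{M,K}(d_\theta)=\frac{1}{MK}\sum_{i,k}\bigl[u(\mathbf x_i,y^\circ_{i,k})-u(\mathbf x_i,y_{i,k})\bigr].
\]
The claimed bound is then a conditional concentration statement for this average around \(\mathcal{R}_M(d_\theta)\).

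Next we condition on the sampled problems \(\{\mathbf x_i\}_{i=1}^M\). Given them, the pairs \((y^\circ_{i,k},y_{i,k})\) for \(i\in[M]\) and \(k\in[K]\) are independent: within a problem the \(K\) draws are iid by Definition~\ref{def:empirical-rvr}, and across problems the sampling is independent. We define
\[
\xi_{i,k}=u(\mathbf x_i,y^\circ_{i,k})-u(\mathbf x_i,y_{i,k}).
\]
Because \(U\in[0,1]\), we have \(u\in[0,1]\), so each \(\xi_{i,k}\) lies in \([-1,1]\), an interval of length \(2\). Each also satisfies \(\mathbb{E}[\xi_{i,k}\mid\mathbf x_i]=\Delta(\mathbf x_i)\). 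Hence
\[
\overline{\mathcal{R}}_{M,K}(d_\theta)-\mathcal{R}_M(d_\theta)=\frac{1}{MK}\sum_{i,k}\bigl(\xi_{i,k}-\mathbb{E}[\xi_{i,k}\mid\mathbf x_i]\bigr)
\]
is a centred average of \(MK\) independent bounded variables.

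We then apply the two-sided Hoeffding inequality, which gives
\[
\Pr\bigl(|\overline{\mathcal{R}}_{M,K}-\mathcal{R}_M|\ge t\bigr)\le 2\exp\!\Bigl(-\tfrac{2(MK)^2t^2}{MK\cdot 2^2}\Bigr)=2\exp\!\Bigl(-\tfrac{MKt^2}{2}\Bigr).
\]
Setting the right-hand side equal to \(\delta\) gives \(t=\sqrt{2\log(2/\delta)/(MK)}\), which is exactly the stated bound. Since the bound holds conditionally for every realisation of the problems, it also holds unconditionally. This is the same mechanism behind Lemma~\ref{lem:prompt-concentration}, there with \(M\) variables \(\Delta(\mathbf x_i)\in[-1,1]\).

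The main obstacle is definitional rather than technical. We must make sure that \(\overline{\mathcal{R}}_{M,K}\) is defined with exact outcome expectations \(u\), so that verifier randomness is pushed into term (III). We must also justify that the rational draws \(y^\circ_{i,k}\) and the deployed draws \(y_{i,k}\) together give independent bounded summands. Pairing them as a single difference \(\xi_{i,k}\) handles this, since the pairs are independent even if we make no assumption about the joint law of \(y^\circ_{i,k}\) and \(y_{i,k}\) within a pair. Pairing also avoids the looser constant that would come from applying Hoeffding separately to the two empirical means and taking a union bound.
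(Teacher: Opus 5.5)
Your proposal is correct and matches the paper's proof essentially step for step. The paper defines the same paired difference $Z_{i,k}=V(\mathbf{x}_i,y^\circ_{i,k})-V(\mathbf{x}_i,y_{i,k})\in[-1,1]$ (your $\xi_{i,k}$), identifies its conditional mean with $V^\circ(\mathbf{x}_i)-V_\theta(\mathbf{x}_i)$, applies Hoeffding's inequality conditionally on $\mathbf{x}_{1:M}$ over the $MK$ summands to get $2\exp(-MK\epsilon^2/2)$, and solves for $\epsilon$. Your two extra remarks are sound refinements of points the paper leaves implicit: the conditional bound transfers to an unconditional one, and only independence across pairs (not within a pair) is needed.
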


\begin{lemma}[verification error bound]
\label{lem:evaluator-estimation}
Assume that all evaluation outcomes are sampled independently, given the sampled reasoning problems and reasoning answers, with \(L\) verifier evaluations for each sampled response.
Then, for any \(\delta\in(0,1)\), with probability at least \(1-\delta\), the verification error can be bounded by,
\[
\left|
\overline{\mathcal{R}}_{M,K}(d_\theta)
-
\widehat{\mathcal{R}}_{M,K,L}(d_\theta)
\right|
\le
\sqrt{
\frac{2\log(2/\delta)}{MKL}
}.
\]
If the verifier is deterministic, this term is zero.
\end{lemma}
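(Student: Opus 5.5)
We prove the verification error bound (Lemma~\ref{lem:evaluator-estimation}) by a conditional Hoeffding argument. The plan is to fix the prompts $\{\mathbf x_i\}_{i=1}^M$ together with both families of sampled answers $\{y_{i,k}\}$ and $\{y^\circ_{i,k}\}$, and to condition on them.

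First we make the intermediate quantity explicit. We read $\overline{\mathcal{R}}_{M,K}(d_\theta)$ as the estimator of Definition~\ref{def:empirical-rvr} with each empirical utility $\widehat U_L(\mathbf x_i,y)=\frac1L\sum_{l}U(o_{l})$ replaced by its conditional mean $\bar U(\mathbf x_i,y)=\mathbb{E}_{o\sim P(\cdot\mid\mathbf x_i,y)}U(o)$. The difference then becomes
\[
\overline{\mathcal{R}}_{M,K}-\widehat{\mathcal{R}}_{M,K,L}
=\frac{1}{MKL}\sum_{i=1}^M\sum_{k=1}^K\sum_{l=1}^L \xi_{i,k,l},
\]
where
\[
\xi_{i,k,l}=\bigl[\bar U(\mathbf x_i,y^\circ_{i,k})-U(o^\circ_{i,k,l})\bigr]-\bigl[\bar U(\mathbf x_i,y_{i,k})-U(o_{i,k,l})\bigr].
\]

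Second, we check the hypotheses of Hoeffding's inequality conditionally. Under the stated assumption, all outcomes $o^\circ_{i,k,l}$ and $o_{i,k,l}$ are independent given the answers. Hence the $MKL$ variables $Z_{i,k,l}=U(o^\circ_{i,k,l})-U(o_{i,k,l})$ are independent, and their conditional mean is $\bar U(\mathbf x_i,y^\circ_{i,k})-\bar U(\mathbf x_i,y_{i,k})$. Since $U\in[0,1]$, each $Z_{i,k,l}$ lies in $[-1,1]$, an interval of length $2$. Hoeffding then gives
\[
\Pr\Bigl(\Bigl|\tfrac{1}{MKL}\textstyle\sum (Z-\mathbb{E}Z)\Bigr|\ge t\Bigr)\le 2\exp\!\bigl(-2(MKL)^2t^2/(MKL\cdot 4)\bigr)=2\exp(-MKLt^2/2).
\]
Setting this equal to $\delta$ yields $t=\sqrt{2\log(2/\delta)/(MKL)}$, the claimed rate. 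The bound holds conditionally for every realization of prompts and answers, so it also holds unconditionally.

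Third, we handle the deterministic case. If $P(\cdot\mid\mathbf x,y)=\delta_{f(\mathbf x,y)}$, then every $U(o_{i,k,l})$ equals $U(f(\mathbf x_i,y_{i,k}))=\bar U(\mathbf x_i,y_{i,k})$, and likewise for the rational answers. Every $\xi_{i,k,l}$ therefore vanishes, and the verification error is exactly zero.

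The main obstacle is definitional rather than technical: $\overline{\mathcal{R}}_{M,K}$ is not spelled out before the lemma, so we must fix its meaning (answers sampled, verifier outcomes averaged out) consistently with Lemmas~\ref{lem:rvr-error-decomposition} and \ref{lem:reasoning-concentration}.

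A second subtlety is pairing the deployed and rational outcomes into a single summand of range $2$, rather than treating them as two separate sums. Two separate sums would each need $\delta/2$ and would produce a worse constant. The independence assumption across both strategies is exactly what makes each paired difference a legitimate independent bounded summand.
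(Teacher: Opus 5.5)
Your proposal is correct and follows the paper's proof essentially step for step. The paper also pairs the outcomes into $W_{i,k,l}=U(o^\circ_{i,k,l})-U(o_{i,k,l})\in[-1,1]$, applies Hoeffding conditionally on the prompts and answers to get $2\exp(-MKL\epsilon^2/2)$, and disposes of the deterministic case via $\widehat U_L(\mathbf{x},y)=V(\mathbf{x},y)$; your explicit remark that the conditional bound transfers to the unconditional one is a small clarification the paper leaves implicit.
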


We thus obtain the estimation guarantee.

\begin{theorem}[estimation error bound]
\label{thm:rvr-one-sided}
Assume \(U(o)\in[0,1]\). For any
\(\delta\in(0,1)\), with probability at least \(1-\delta\) over the sampled reasoning problems, reasoning answers, and evaluation outcomes,
\[
\begin{aligned}
&
\left|
\mathcal{R}(d_\theta)
-
\widehat{\mathcal{R}}_{M,K,L}(d_\theta)
\right|
\\
\le{}&
\sqrt{
\frac{2\log(6/\delta)}{M}
}
+
\sqrt{
\frac{2\log(6/\delta)}{MK}
}
+
\sqrt{
\frac{2\log(6/\delta)}{MKL}
}.
\end{aligned}
\]
If the verifier is deterministic, the verification error vanishes.
\end{theorem}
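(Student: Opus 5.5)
The bound follows by combining the decomposition of Lemma~\ref{lem:rvr-error-decomposition} with the three concentration bounds, via the triangle inequality and a union bound. First I take absolute values in the identity of Lemma~\ref{lem:rvr-error-decomposition}, which gives
\[
\bigl|\mathcal{R}(d_\theta)-\widehat{\mathcal{R}}_{M,K,L}(d_\theta)\bigr|
\le |\mathrm{(I)}|+|\mathrm{(II)}|+|\mathrm{(III)}|.
\]
This step is deterministic and uses no probability.

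Next I apply Lemma~\ref{lem:prompt-concentration}, Lemma~\ref{lem:reasoning-concentration} and Lemma~\ref{lem:evaluator-estimation}, each at confidence level $\delta/3$ in place of $\delta$. Since $\log(2/(\delta/3))=\log(6/\delta)$, each lemma then gives exactly the corresponding summand in the statement. Call the failure events $E_1,E_2,E_3$; each has probability at most $\delta/3$. By a union bound, $\Pr(E_1\cup E_2\cup E_3)\le\delta$, so with probability at least $1-\delta$ all three bounds hold at once, and adding them proves the claim.

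The main subtlety is that Lemmas~\ref{lem:reasoning-concentration} and~\ref{lem:evaluator-estimation} are conditional statements. The first holds given the sampled problems, the second given the problems and the sampled answers. I would handle this by the tower property. For example, $\Pr(E_2)=\mathbb{E}[\Pr(E_2\mid \mathbf{x}_{1:M})]\le\delta/3$, because the conditional bound holds for every realisation of the prompts. The same argument, conditioning on both prompts and answers, gives $\Pr(E_3)\le\delta/3$. The union bound then applies to these unconditional probabilities over the joint sampling, and no independence between the three events is needed.

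Two points remain. The assumption $U(o)\in[0,1]$ is what makes the underlying Hoeffding-type arguments in the three lemmas valid. Each summand of the risk is a difference of two utilities and so lies in $[-1,1]$, which is consistent with the constant $2$ in the lemmas. For a deterministic verifier, Lemma~\ref{lem:evaluator-estimation} gives $\mathrm{(III)}=0$ identically. Then $E_3$ is empty and the third term drops; one could even split $\delta$ in halves to sharpen the constants, but the stated bound holds as written.
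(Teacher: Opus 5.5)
Your proposal is correct and matches the paper's proof. Both apply the triangle inequality to the decomposition of Lemma~\ref{lem:rvr-error-decomposition}, invoke the three concentration lemmas each at failure probability $\delta/3$ (so that $\log(2/(\delta/3))=\log(6/\delta)$), take a union bound, and handle the deterministic case via Lemma~\ref{lem:evaluator-estimation}; your tower-property argument, which turns the conditional bounds of Lemmas~\ref{lem:reasoning-concentration} and~\ref{lem:evaluator-estimation} into unconditional failure probabilities, makes explicit a step the paper leaves implicit.
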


\paragraph{Sample complexity.}
% The estimation error scales as
% \[
% O\left(
% M^{-1/2}
% +
% (MK)^{-1/2}
% +
% (MKL)^{-1/2}
% \right).
% \]
% To make each of the three statistical error terms at most
% \(\epsilon\) with high probability, it suffices, up to logarithmic
% factors, that
% \(
% M=O(\epsilon^{-2}),
% \quad
% MK=O(\epsilon^{-2}),
% \quad
% MKL=O(\epsilon^{-2}).
% \)
% Thus, \(M\) controls coverage of the prompt distribution, \(K\) controls estimation of the reasoning strategies, and \(L\) controls estimation of the outcome utility.

To make the three statistical error terms at most \(\epsilon\) with high probability, it suffices to take
\begin{align*}  
M&=O\!\left(\epsilon^{-2}\log(1/\delta)\right),\\
    K &= O\!\left(\epsilon^{-2}\log(1/\delta)/M\right),\\
    L& = O\!\left(\epsilon^{-2}\log(1/\delta)/MK\right).
\end{align*}    
Thus, estimating rational value risk depends on the numbers of reasoning problems, sampled reasoning responses, and verifier evaluations. For verifiable reasoning tasks with a deterministic verifier, the dependence on verifier samples is eliminated.

% \paragraph{Practical insights.}
% The term \(A_K(d_\theta)\)
% is unavoidable: it measures the utility loss incurred when the finite candidate set fails to contain a high-utility answer. In binary deterministic tasks, \(A_K(d_\theta)\) decays exponentially in \(K\) whenever \(p_x>0\); however, if the model never samples a high-utility answer for some prompts, \(p_x=0\), sampling more candidates cannot remove this error.
% For stochastic verifiers, \(q_{i,k}\) denotes a probability that the verifier assigns a positive preference to candidate \(\hat y_{i,k}\). Majority voting can reduce its variance, but it cannot remove verifier bias.

\paragraph{Practical insights.}
Theorem~\ref{thm:rvr-one-sided} shows that the estimation error decreases as the reasoning problem number \(M\), sampled reasoning response number \(K\), and verifier evaluation number \(L\) increase. Among them, the number of reasoning problems \(M\) is the primary factor, as increasing \(M\) reduces all three error terms simultaneously. Increasing \(K\) further reduces reasoning and verification errors, while increasing \(L\) reduces only the verification error. Consequently, when \(M\) and \(K\) are sufficiently large, verification sampling contributes relatively little to the total estimation error. However, increasing \(L\) only improves estimation of the verifier's expected utility $\mathbb{E}_{o\sim P(\cdot|\mathbf{x},y)}[U(o)]$ and cannot remove bias in the verifier itself. %\fh{We therefore consider both the model itself and an external LLM as verifiers in the following experiments to assess the effect of verifier choice and potential verifier bias.}
}

%% file: sections/Exp.tex
\section{Experiments}

Extensive experiments are conducted to verify four empirically verifiable hypotheses, which well establish that \emph{there is irrationality on top of misalignment in LLM reasoning.} %, which are verified in extensive experiments.

\subsection{Empirically verifiable hypotheses}

{
\paragraph{H1: Rational value risk is widespread.} 
%Across different models and benchmarks, 
LLM reasoning in deployment can deviate from rational reasoning, leading to a utility loss. This utility loss is expected to appear across different models and benchmarks, including both conversational and verifiable reasoning tasks. %, and it is not limited to any specific model family, benchmark, or verifier type.

{
\paragraph{H2: Value alignment can reduce, but cannot avoid, irrationality.}
%Rationality cannot be fully improved by value alignment alone. LLMs can be well aligned with a value function but fail to consistently deploy responses that maximise it.
Value alignment shifts the model distribution toward the target value function, but the deployed reasoning strategy can fail to consistently produce responses that maximise that function. Thus, rational value risk remains after value alignment. %by methods  such as SFT, DPO, and RLVR.

{\paragraph{H3: Rationality benefits from self-consistency, and is insensitive to sampling diversity.}
With the LLM parameters fixed, inference-time reasoning strategies can affect its rationality. We study two strategies with different effects: sampling temperature changes the randomness and diversity of sampled reasoning responses, while self-consistency aggregates multiple responses to determine the final answer.}
}

\paragraph{H4: A longer chain of thought improves rationality with diminishing returns.}
Increasing the reasoning length can help the model reduce rational value risk, but this effect diminishes after a certain reasoning budget.

\subsection{Implementation details}

\paragraph{Setup.} %\fh{be clearer on two tasks} %We first measure the empirical rational value risk on general real-world conversation tasks. 
{We evaluate the rationality of LLM reasoning on two task types: \textbf{(1) conversational tasks}: given a conversation dataset $D=\{\mathbf x_i\}_{i=1}^M$ ($\mathbf x_i =(x_i,y_i^+)$), a stochastic verifier compares the LLM's answer $y$ with the human preferred answer $y^+_{i}$, and then returns an outcome $o_i\in\{\text{win},\text{tie},\text{lose}\}$, %corresponding to win, tie, or lose, respectively, 
where $o_i \sim P(\cdot|\mathbf x_i,y_i)$. We then define the utility of each outcome according to the verifier's preference:}
\[
U(o_i) =
\begin{cases}
1,   & o_i=\text{win} \quad (y_i \succ y_i^+)\\
0.5, &  o_i=\text{tie} \quad (y_i \approx y_i^+)\\
0,   & o_i=\text{lose} \quad (y_i \prec y_i^+)
\end{cases},
\]
where $\succ$, $\prec$, and $\approx$ denote verifier preference, non-preference, and indifference, respectively.

The stochastic verifier can be from the LLM itself, reflecting its subjective preference in $P(\cdot|\mathbf x_i,y_i)$, or from larger-scale models as external verifiers.
{\textbf{(2) verifiable reasoning tasks:} we consider binary utility functions defined by answer correctness:}
%correctness defined as
%\begin{equation*}
\(U(f(\mathbf x_i, y_i)) = \mathds{1}\big[y_i = y^+_i\big]\).
%\end{equation*}
% For a fixed prompt $x_i$, we draw $K$ trajectories $\{y_{i,k}\}_{k=1}^{K}$ i.i.d.\ from $\pi_\theta(\cdot\mid x_i)$. The empirical rational gap is computed by averaging over $N$ prompts $\{x_i\}_{i=1}^{N} \stackrel{\text{iid}}{\sim} \rho$:
% \begin{align}
% \hat{\mathcal{R}}_K(\theta) &= U^\circ_K - \bar{U}_K \nonumber \\
% &=\frac{1}{M}\sum_{i=1}^{M}\max_{k\in[K]} U(x_i, y_{i,k}) \nonumber\\ &\qquad -\frac{1}{MK}\sum_{i=1}^{M}\sum_{k=1}^{K} U(x_i, y_{i,k}).
% \end{align}
% The first term $U^\circ_K$ is a sample-based estimator of the reachable utility $U(x_i, y_\theta^\circ)$ in empirical rational gap of Equation~\eqref{eq:erg}. 

% \begin{corolary}
% \label{cor:rkb}

% \end{corolary}
\begin{table*}[t]
\centering
\footnotesize
\setlength{\tabcolsep}{4pt}
\caption{Rational value risk (RVR) across LLMs on conversational and development benchmarks. For conversational tasks, expected utility by rational reasoning (REU) is estimated using DeepSeek-V4-Pro, whereas for verifiable reasoning tasks, REU is set to $1.000$. Compute budget $K{=}64$; values are reported as mean $\pm$ 95\% bootstrap confidence interval. Bold indicates the smallest RVR per dataset. (Self.: self-as-verifier; Ext.: external verifier, DeepSeek-V4-Flash.)}
\label{tab:h1_coloured}
\resizebox{\textwidth}{!}{%
\begin{tabular}{ll *{4}{cc c}}
\toprule
 &  & \multicolumn{3}{c}{Llama-3.1-8B-Instruct} & \multicolumn{3}{c}{Qwen2.5-7B-Instruct} & \multicolumn{3}{c}{Tülu-3-8B-RLVR} & \multicolumn{3}{c}{Qwen2.5-72B-Instruct} \\
\cmidrule(lr){3-5}\cmidrule(lr){6-8}\cmidrule(lr){9-11}\cmidrule(lr){12-14}
Task & Dataset & REU & DEU & RVR & REU & DEU & RVR & REU & DEU & RVR & REU & DEU & RVR \\
\midrule
\multirow{4}{*}{Conversation} & UltraFeedback (self.) & \cellcolor{colREU!39}0.643 & \cellcolor{colDEU!31}0.524 & \cellcolor{colRVR!7}0.119\ci{0.029} & \cellcolor{colREU!34}0.567 & \cellcolor{colDEU!32}0.533 & \cellcolor{colRVR!2}\best{0.034}\ci{0.020} & \cellcolor{colREU!36}0.600 & \cellcolor{colDEU!32}0.532 & \cellcolor{colRVR!4}0.068\ci{0.024} & \cellcolor{colREU!47}0.782 & \cellcolor{colDEU!43}0.712 & \cellcolor{colRVR!4}0.069\ci{0.034} \\
 & UltraFeedback (ext.) & \cellcolor{colREU!43}0.713 & \cellcolor{colDEU!30}0.492 & \cellcolor{colRVR!13}0.221\ci{0.026} & \cellcolor{colREU!43}0.713 & \cellcolor{colDEU!30}0.504 & \cellcolor{colRVR!13}0.208\ci{0.026} & \cellcolor{colREU!43}0.713 & \cellcolor{colDEU!30}0.497 & \cellcolor{colRVR!13}0.215\ci{0.027} & \cellcolor{colREU!43}0.713 & \cellcolor{colDEU!35}0.590 & \cellcolor{colRVR!7}\best{0.122}\ci{0.030} \\
 & AlpacaEval (self.) & \cellcolor{colREU!49}0.812 & \cellcolor{colDEU!41}0.690 & \cellcolor{colRVR!7}0.122\ci{0.020} & \cellcolor{colREU!48}0.793 & \cellcolor{colDEU!40}0.660 & \cellcolor{colRVR!8}0.133\ci{0.021} & \cellcolor{colREU!46}0.763 & \cellcolor{colDEU!45}0.756 & \cellcolor{colRVR!0}0.007\ci{0.020} & \cellcolor{colREU!58}0.970 & \cellcolor{colDEU!58}0.972 & \cellcolor{colRVR!0}\best{-0.002}\ci{0.014} \\
 & AlpacaEval (ext.) & \cellcolor{colREU!54}0.904 & \cellcolor{colDEU!30}0.502 & \cellcolor{colRVR!24}0.402\ci{0.017} & \cellcolor{colREU!54}0.904 & \cellcolor{colDEU!33}0.543 & \cellcolor{colRVR!22}0.361\ci{0.020} & \cellcolor{colREU!54}0.904 & \cellcolor{colDEU!44}0.738 & \cellcolor{colRVR!10}0.166\ci{0.019} & \cellcolor{colREU!54}0.904 & \cellcolor{colDEU!49}0.824 & \cellcolor{colRVR!5}\best{0.079}\ci{0.017} \\
\midrule
\multirow{4}{*}{\shortstack[l]{Verifiable\\reasoning}} & GSM8K & \cellcolor{colREU!60}1.000 & \cellcolor{colDEU!47}0.780 & \cellcolor{colRVR!13}0.220\ci{0.014} & \cellcolor{colREU!60}1.000 & \cellcolor{colDEU!54}0.906 & \cellcolor{colRVR!6}0.094\ci{0.012} & \cellcolor{colREU!60}1.000 & \cellcolor{colDEU!52}0.861 & \cellcolor{colRVR!8}0.139\ci{0.014} & \cellcolor{colREU!60}1.000 & \cellcolor{colDEU!58}0.958 & \cellcolor{colRVR!2}\best{0.042}\ci{0.009} \\
 & MATH & \cellcolor{colREU!60}1.000 & \cellcolor{colDEU!28}0.470 & \cellcolor{colRVR!32}0.530\ci{0.019} & \cellcolor{colREU!60}1.000 & \cellcolor{colDEU!54}0.898 & \cellcolor{colRVR!6}0.102\ci{0.013} & \cellcolor{colREU!60}1.000 & \cellcolor{colDEU!39}0.658 & \cellcolor{colRVR!21}0.342\ci{0.021} & \cellcolor{colREU!60}1.000 & \cellcolor{colDEU!56}0.938 & \cellcolor{colRVR!4}\best{0.062}\ci{0.011} \\
 & HumanEval & \cellcolor{colREU!60}1.000 & \cellcolor{colDEU!26}0.431 & \cellcolor{colRVR!34}0.569\ci{0.056} & \cellcolor{colREU!60}1.000 & \cellcolor{colDEU!45}0.749 & \cellcolor{colRVR!15}\best{0.251}\ci{0.052} & \cellcolor{colREU!60}1.000 & \cellcolor{colDEU!24}0.398 & \cellcolor{colRVR!36}0.602\ci{0.047} & \cellcolor{colREU!60}1.000 & \cellcolor{colDEU!44}0.730 & \cellcolor{colRVR!16}0.270\ci{0.059} \\
 & MathArena & \cellcolor{colREU!60}1.000 & \cellcolor{colDEU!0}0.003 & \cellcolor{colRVR!60}0.997\ci{0.002} & \cellcolor{colREU!60}1.000 & \cellcolor{colDEU!5}0.087 & \cellcolor{colRVR!55}0.913\ci{0.051} & \cellcolor{colREU!60}1.000 & \cellcolor{colDEU!0}0.008 & \cellcolor{colRVR!60}0.992\ci{0.007} & \cellcolor{colREU!60}1.000 & \cellcolor{colDEU!7}0.115 & \cellcolor{colRVR!53}\best{0.885}\ci{0.063} \\
\bottomrule
\end{tabular}%
}
\end{table*}

\paragraph{Datasets and benchmarks.} %\fh{polish the presentation}
{
For conversational tasks, we measure the rational value risk on two benchmarks: UltraFeedback~\citep{cui2024ultrafeedback}
and AlpacaEval~\citep{dubois2024alpacaeval}.
For verifiable reasoning, we use three widely used benchmarks in LLM development and evaluation: GSM8K~\citep{cobbe2021gsm8k},
MATH~\citep{hendrycks2021measuring}, and HumanEval~\citep{chen2021evaluating}. In addition, we adopt one reasoning benchmark as a deployment dataset:
{MathArena, including AIME 2025 and BRUMO 2025~\citep{balunovic2025matharena}}. 
\textit{Its release dates are later than or close to those of the evaluated models}, while it is more difficult than other benchmarks,  inducing a distribution shift between the development inputs, $\mathbf x \sim \rho$, and (unseen or harder) deployment inputs, $\mathbf x \sim \rho^\dagger$. Detailed descriptions are in Appendix~\ref{app:per-dataset}.
}
\paragraph{Models.}
Experiments {for H1-H4} use open-weight models,
Qwen2.5-7B-Instruct~\citep{yang2024qwen2},
Llama-3.1-8B-Instruct~\citep{grattafiori2024llama},
Llama-3.1-T\"ulu-3-8B-(SFT, DPO, RLVR)~\citep{lambert2024tulu3},
%To study rational value risk at larger scale, we additionally evaluate
Qwen2.5-72B-Instruct~\citep{yang2024qwen2},  and
Llama-3.1-T\"ulu-3-70B-(SFT, DPO, RLVR)~\citep{lambert2024tulu3}, and
%We further test frontier 
proprietary models, GPT-5.2 \cite{openai2025gpt52},
GPT-5.5 \cite{openai2026gpt55}, DeepSeek-V4-Pro~\cite{deepseek2026v4}, and DeepSeek-V4-Flash APIs \cite{deepseek2026v4}. %{references}.
{
In practice, the rational reasoning is not directly accessible. For conversational tasks, we assume that a stronger model is more likely to approximate rational reasoning and use DeepSeek-V4-Pro (1.6T parameters)~\citep{deepseek2026v4}, as a practical proxy. We use DeepSeek-V4-Flash (284B parameters)~\citep{deepseek2026v4} as the external stochastic verifier for all evaluated models.
For verifiable reasoning tasks, we set the expected utility of rational reasoning to $1$, since ground-truth answers exist and rational reasoning achieves maximum utility by producing them.}

% For GSM8K and MATH, we use the answer-extraction pipeline from \texttt{lm-evaluation-harness} together with SymPy-based symbolic equivalence checking, following the standard practice introduced by Hendrycks et al. For HumanEval, correctness is determined by executing the model's completion against the provided unit tests.

% \paragraph{Calibration of the sampling budget $K$.}
% The estimator ${U}^\circ_K$ in Equation~\eqref{eq:erg} is a lower bound on the reachable maximized utility that converges monotonically as $K\to\infty$ in $K$. To select a sampling budget that achieves a sufficiently tight estimate of $\hat{\mathcal{R}}_K(\pi_\theta)$ while remaining computationally tractable, we evaluate Tülu-3-8B on a 200-prompt subset of GSM8K at $K \in \{1, 4, 16, 64, 256, 1024\}$ and track both $\hat{\mathcal{R}}_K$ and its GPU-hour cost. 

\paragraph{Configuration.}
Unless otherwise specified, $K {=} 64$ reasoning paths are sampled per prompt using temperature sampling with $\tau {=} 1.0$, without top-$p$ or top-$k$ truncation. We consider two verifier settings on UltraFeedback and AlpacaEval: (1) self-as-verifier with verifier budget $L{=}5$, where the evaluated model acts as its own verifier, and (2) external verifier using a larger-scale language model, such as DeepSeek-V4-Flash as the verifier with budget $L{=}3$. Estimator calibration and verifier details are presented in Appendix~\ref{app:calibration} and Appendix~\ref{app:evaluator}, respectively.

\begin{table}[t]
\centering
\small
\setlength{\tabcolsep}{6pt}
\caption{Rational value risk on MathArena across all evaluated models, including open-weight 7-8B and 70-72B models and proprietary models. As MathArena is a verifiable reasoning task, REU${=}1$. Values are reported as mean $\pm$ 95\% bootstrap confidence intervals.}
\label{tab:matharena_all}
\begin{tabular}{ll c}
\toprule
Size & Model & Rational value risk \\
\midrule
\multirow{3}{*}{7-8B} & Qwen2.5-7B & 0.913\ci{0.051} \\
 & Tülu-3-8B-RLVR & 0.992\ci{0.007} \\
 & Llama-3.1-8B & 0.997\ci{0.002} \\
\midrule
\multirow{2}{*}{70-72B} & Qwen2.5-72B & 0.885\ci{0.063} \\
 & Tülu-3-70B-RLVR & 0.940\ci{0.042} \\
\midrule
\multirow{3}{*}{APIs} & DeepSeek-V4-Flash & 0.414\ci{0.102} \\
 & GPT-5.2 & 0.510\ci{0.106} \\
 & GPT-5.5 & 0.431\ci{0.107} \\
\bottomrule
\end{tabular}
\end{table}

\paragraph{Reproducibility and budget.} 
Experiments are performed with {vLLM 0.6.3} on six NVIDIA A800 80GB GPUs {and APIs including GPT 5.2, GPT 5.5, DeepSeek-V4-Pro, and DeepSeek-V4-Flash}. The experiments require approximately 76.6 GPU-hours. {API cost is \$583 in total, with GPT 5.2(\$117), GPT 5.5(\$117), DeepSeek-V4-Pro(\$77), and DeepSeek-V4-Flash(\$272).} Additional implementation details are provided in Appendix~\ref{app:config}. The code is at \url{https://github.com/EVIEHub/LLM-Rationality}.%{The code is available at \url{https://anonymous.4open.science/r/rationality-of-LLM-reasoning-78C}.}

\subsection{Experimental results}

{
\paragraph{H1: Rational value risk is widespread.} Table~\ref{tab:h1_coloured} shows the expected utility of rational reasoning (REU), expected utility of deployed reasoning (DEU), and their difference, rational value risk (RVR), across conversational, mathematical reasoning, and code generation benchmarks. We observe positive rational value risk across nearly all evaluated settings. 

On conversational tasks, RVR is generally larger under external verification than under self-verification. For example, on UltraFeedback, externally verified RVR ranges from $0.122$ for Qwen2.5-72B to $0.504$ for Qwen2.5-7B, but self verification has smaller RVR values for all models, including $0.068$ for T\"ulu-3-8B and $0.069$ for Qwen2.5-72B. A similar pattern holds on AlpacaEval, where Qwen2.5-72B exhibits an RVR close to zero ($-0.002$) in the self-as-verifier setting. This pattern suggests a self-verification bias that models may prefer their own responses. 

On verifiable reasoning tasks, RVR increases when models are evaluated on more challenging deployment benchmarks. For instance, Qwen2.5-72B exhibits RVRs of only $0.042$ on GSM8K and $0.062$ on MATH, but it rises markedly to $0.885$ on MathArena. This indicates that model reasoning becomes less rational as task difficulty increases.

\begin{table}[t]
\centering
\small
\setlength{\tabcolsep}{5pt}
\caption{Rational value risk of T\"ulu-3-8B family across SFT, DPO, and RLVR stages. Bold indicates the smallest RVR per dataset. Values are reported as mean $\pm$ 95\% bootstrap confidence interval. (Ext.: external verifier, DeepSeek-V4-Flash).}
\label{tab:h2}
\begin{tabular}{ll ccc}
\toprule
Dataset & & SFT & DPO & RLVR \\
\midrule
\multirow{3}{*}{\shortstack[l]{UltraFeedback\\(external verifier)}}
 & REU & 0.713 & 0.713 & 0.713 \\
 & DEU & 0.037 & 0.493 & 0.497 \\
 & RVR & \stk{0.675}{0.027} & \stk{0.219}{0.026} & \stk{\best{0.215}}{0.027} \\
\midrule
\multirow{3}{*}{\shortstack[l]{AlpacaEval\\(external verifier)}}
 & REU & 0.904 & 0.904 & 0.904 \\
 & DEU & 0.020 & 0.728 & 0.738 \\
 & RVR & \stk{0.884}{0.019} & \stk{0.176}{0.020} & \stk{\best{0.166}}{0.019} \\
\midrule
\multirow{3}{*}{GSM8K}
 & REU & 1.000 & 1.000 & 1.000 \\
 & DEU & 0.588 & 0.858 & 0.861 \\
 & RVR & \stk{0.412}{0.016} & \stk{0.142}{0.013} & \stk{\best{0.139}}{0.014} \\
\midrule
\multirow{3}{*}{MATH}
 & REU & 1.000 & 1.000 & 1.000 \\
 & DEU & 0.276 & 0.639 & 0.658 \\
 & RVR & \stk{0.724}{0.016} & \stk{0.361}{0.021} & \stk{\best{0.342}}{0.021} \\
\midrule
\multirow{3}{*}{HumanEval}
 & REU & 1.000 & 1.000 & 1.000 \\
 & DEU & 0.148 & 0.250 & 0.398 \\
 & RVR & \stk{0.852}{0.021} & \stk{0.750}{0.036} & \stk{\best{0.602}}{0.047} \\
\bottomrule
\end{tabular}
\end{table}

Table~\ref{tab:matharena_all} further compares RVR across models of different scales on MathArena. The evaluated 7-8B models exhibit the highest RVR, ranging from $0.913$ to $0.997$. The 70-72B models show only slight reductions, with RVRs of $0.885$ and $0.940$. Proprietary models achieve the lowest RVR, ranging from $0.414$ to $0.510$, but still exhibit a large utility gap relative to rational reasoning on challenging tasks.

These results confirm that rational value risk is widespread across both conversational and verifiable reasoning tasks. They also reveal three key factors associated with reasoning rationality, including task difficulty, model capability, and, for conversational tasks, the choice of verifier. In this context, rational value risk becomes a central bottleneck in inference-time reasoning.

}

{

{
\paragraph{H2: Value alignment reduces, but cannot avoid, irrationality.}
%We next analyse the effect of value alignment on rational value risk during post-training.
Table~\ref{tab:h2} reports REU, DEU, and RVR of the T\"ulu-3-8B family across SFT, DPO, and RLVR stages on conversational and verifiable reasoning benchmarks.
We observe %Table~\ref{tab:h2} shows 
that post-training generally reduces rational value risk, with the largest improvements occurring from SFT to DPO.
For example, RVR decreases from \(0.675\) to \(0.219\) on UltraFeedback, from \(0.412\) to \(0.142\) on GSM8K, and from \(0.724\) to \(0.361\) on MATH. The additional reduction from DPO to RLVR is modest on these benchmarks, although HumanEval shows a larger decrease, from \(0.750\) to \(0.602\). Despite these improvements, rational value risk remains after RLVR, particularly on MATH and HumanEval.

\begin{table}[t]
\centering
\small
\setlength{\tabcolsep}{5pt}
\caption{Rational value risk along the value-alignment pipeline of the T\"ulu-3 models on MathArena. Values are reported as mean $\pm$ 95\% bootstrap confidence interval.}
\label{tab:h2_matharena}
\begin{tabular}{ll ccc}
\toprule
Model & Stage & Rational value risk \\
\midrule
\multirow{3}{*}{Tülu-3-8B} & SFT & 0.995\ci{0.003} \\
 & DPO & \best{0.989}\ci{0.008} \\
 & RLVR &  0.992\ci{0.007} \\
\midrule
\multirow{3}{*}{Tülu-3-70B} & SFT & 0.976\ci{0.018} \\
 & DPO & \best{0.938}\ci{0.042} \\
 & RLVR & 0.940\ci{0.042} \\
\bottomrule
\end{tabular}
\end{table}

\begin{figure}[t]
    \centering
    \includegraphics[width=\linewidth,height=10cm]{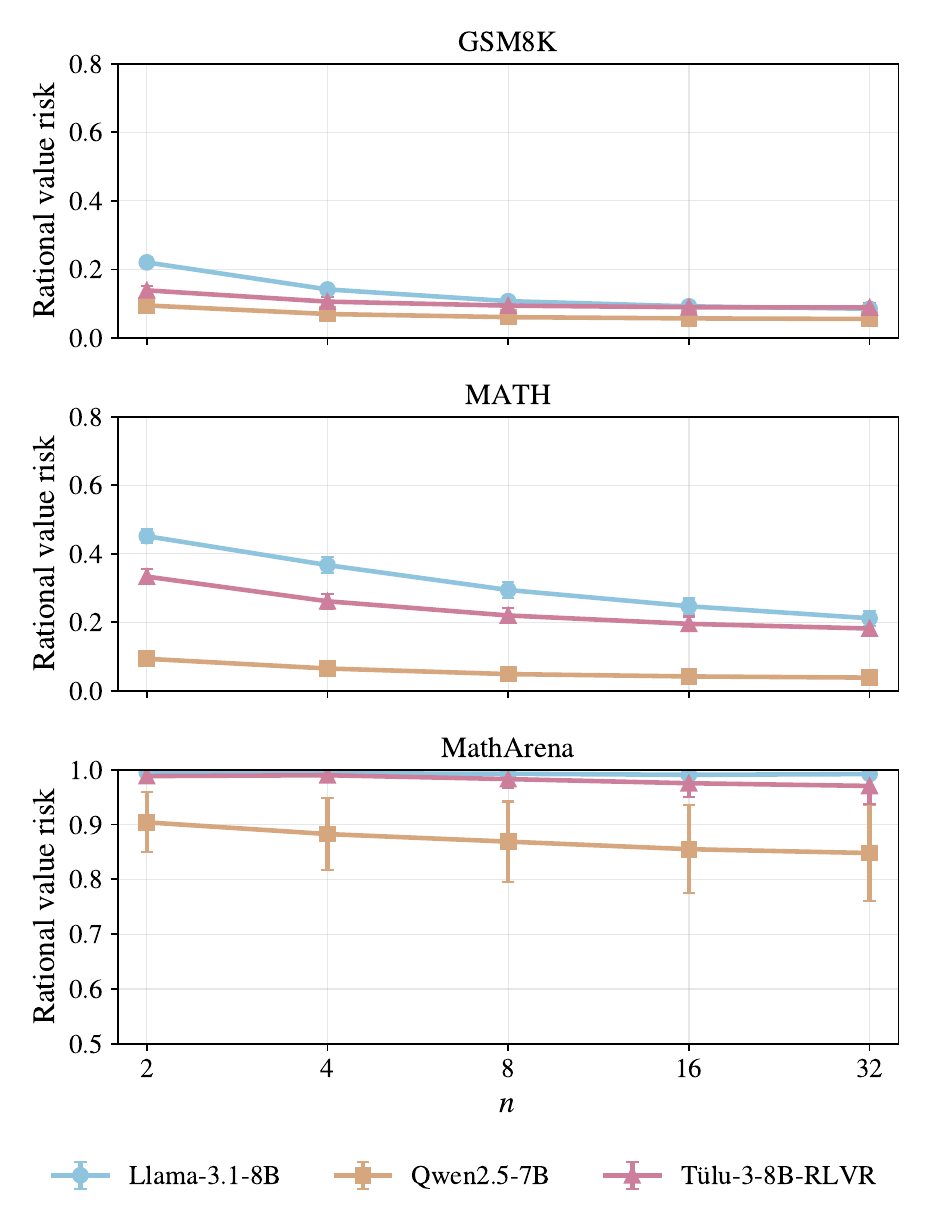}
\caption{Effects of self-consistency under different voting budgets $n=\{2,4,8,16,32\}$ on rational value risk of T\"ulu-3-8B-RLVR, Qwen2.5-7B-Instruct (Qwen2.5-7B), and Llama-3.1-8B-Instruct (Llama-3.1-8B).}
    \label{fig:h3_sc}
\end{figure}

\begin{table}[t]
\centering
\small
\setlength{\tabcolsep}{6pt}
\caption{Effects of sampling temperature $\tau=\{0.0,0.7,1.0\}$ on rational value risk of T\"ulu-3-8B-RLVR (T\"ulu-3-8B), Qwen2.5-7B-Instruct (Qwen2.5-7B), and Llama-3.1-8B-Instruct (Llama-3.1-8B). Values are reported as mean $\pm$ 95\% bootstrap confidence interval.}
\label{tab:h3_temperature}
\begin{tabular}{llccc}
\toprule
Dataset & Model & $\tau{=}0.0$ & $\tau{=}0.7$ & $\tau{=}1.0$ \\
\midrule
\multirow{3}{*}{GSM8K} & Llama-3.1-8B & 0.154 & 0.170 & 0.220 \\
 & Qwen2.5-7B & 0.082 & 0.088 & 0.094 \\
 & Tülu-3-8B & 0.136 & 0.133 & 0.139 \\
\midrule
\multirow{3}{*}{MATH} & Llama-3.1-8B & 0.344 & 0.392 & 0.530 \\
 & Qwen2.5-7B & 0.089 & 0.093 & 0.102 \\
 & Tülu-3-8B & 0.328 & 0.331 & 0.342 \\
\midrule
\multirow{3}{*}{MathArena} & Llama-3.1-8B & 1.000 & 0.989 & 0.997 \\
 & Qwen2.5-7B & 0.900 & 0.906 & 0.913 \\
 & Tülu-3-8B & 1.000 & 0.989 & 0.992 \\
\bottomrule
\end{tabular}
\end{table}

\begin{figure}[t]
    \centering
    \includegraphics[width=\linewidth,height=10.28cm]{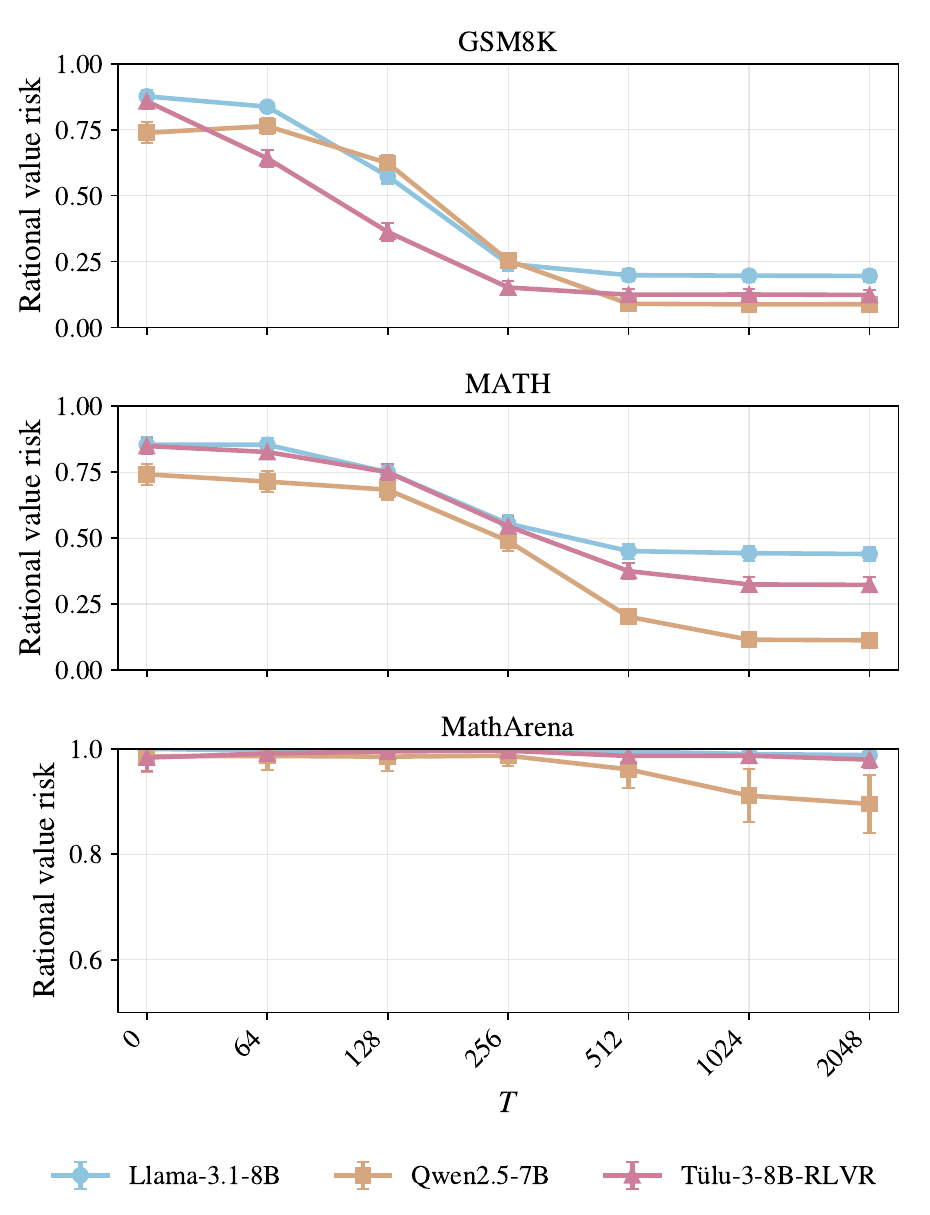}
\caption{Rational value risk of T\"ulu-3-8B-RLVR, Qwen2.5-7B-Instruct (Qwen2.5-7B), and Llama-3.1-8B-Instruct (Llama-3.1-8B) vs. reasoning length $T$.}
    \label{fig:h4_rl}
\end{figure}

Table~\ref{tab:h2_matharena} reports rational value risk for the T\"ulu-3 8B and 70B models across the value-alignment pipeline on the more challenging MathArena benchmark. We observe a different pattern from the previous benchmarks, with value alignment showing limited reductions in RVR. For T\"ulu-3-8B, RVR remains nearly unchanged across SFT, DPO, and RLVR (\(0.995\), \(0.989\), and \(0.992\), respectively). T\"ulu-3-70B shows a larger reduction from \(0.976\) at SFT to \(0.938\) at DPO, but little additional change after RLVR (\(0.940\)). It shows that rationality improvement of T\"ulu-3-8B is not apparent across the alignment stages, suggesting that value alignment is less effective on challenging benchmarks at the 8B scale.

These results indicate that value alignment improves reasoning rationality, particularly through DPO, but rational value risk persists even after RLVR. It is also less effective at reducing rational value risk under increased task difficulty.
}

{
\paragraph{H3: Rationality benefits from self-consistency, but is insensitive to sampling diversity.}
Table~\ref{tab:h3_temperature} reports the effect of sampling temperature (\(\tau\in\{0,0.7,1.0\}\)), while Figure~\ref{fig:h3_sc} reports the effect of self-consistency under different voting budgets (\(n\in\{2,4,8,16,32\}\)).

Table~\ref{tab:h3_temperature} shows that sampling temperature has a small and inconsistent effect on RVR. Qwen2.5-7B and T\"ulu-3-8B remain stable across temperatures on GSM8K and MATH, while Llama-3.1-8B shows slightly larger increases at temperature $\tau=1.0$. On MathArena, RVR stays consistently large across temperature settings. Figure~\ref{fig:h3_sc} illustrates a different pattern for self-consistency. Increasing the voting budget reduces rational value risk, although the reduction becomes less apparent as \(n\) increases. On MATH, RVR decreases from \(0.333\) to \(0.182\) for T\"ulu-3-8B, from \(0.094\) to \(0.039\) for Qwen2.5-7B, and from \(0.451\) to \(0.212\) for Llama-3.1-8B as \(n\) increases from \(2\) to \(32\). Similar trends appear on GSM8K and MATH, but the improvement is weaker on MathArena.

These results indicate rational value risk depends strongly on self-consistency rather than sampling randomness. Temperature sampling improves the diversity of the sampled candidate set, but has a limited effect on rationality. In contrast, self-consistency with more voting budget contributes to rational reasoning. 
}

\paragraph{H4: A longer chain of thought improves rationality with diminishing returns.}
{
Figure~\ref{fig:h4_rl} illustrates the effect of reasoning length \(T \in \{0,64,128,256,512,1024,2048\}\). %on rational value risk.

% Rational value risk decreases monotonically with longer reasoning across all models, with diminishig retuns. For example, on GSM8K of T\"ulu-3-8B-RLVR, it increases from \(0.080\) at \(T=0\) to \(0.456\) at \(T=64\), and then decreases to \(0.107\) at \(T=2048\). The pattern is similar on MATH: its rational value risk increases from \(0.048\) at \(T=0\) to \(0.393\) at \(T=128\), before decreasing when \(T\geq256\). In addition, the benefit of longer reasoning becomes smaller at longer reasoning. On GSM8K and MATH of these three models, their REU and DEU are close to their best values after \(T\geq512\) and \(T\geq1024\), respectively. The MathArena benchmark across three models shows that longer reasoning slightly improves REU, but DEU remains close to zero, while their rational value risk continues to increase.

{
Rational value risk decreases with longer reasoning in most cases, although the rate of reduction decreases at longer chain of thought. On GSM8K, T\"ulu-3-8B-RLVR reduces RVR from \(0.858\) at \(T=0\) to \(0.152\) at \(T=256\), after which further reductions are slight, reaching \(0.123\) at \(T=2048\). Llama-3.1-8B also benefits from a larger reasoning budget, with RVR decreasing from \(0.876\) to \(0.196\). Qwen2.5-7B exhibits a small initial increase from \(0.738\) to \(0.764\), followed by a decrease to \(0.088\). A similar pattern appears on MATH, where for T\"ulu-3-8B-RLVR, RVR decreases steadily from \(0.844\) at \(T=0\) to \(0.311\) at \(T=2048\), with most of the reduction occurring before \(T=512\). These results show that longer reasoning can reduce rational value risk, but the additional benefit diminishes once the reasoning length is sufficiently large. In contrast, a longer chain of thought contributes less to improving rationality across all models on MathArena, where RVR remains high even with extended reasoning.
}

% Longer reasoning generally reduces rational value risk on GSM8K and MATH, with sharply diminishing returns. On GSM8K, RVR falls steeply as the budget grows---for T"ulu-3-8B-RLVR from (0.858) at (T=0) to (0.123) at (T=2048)---and then flattens, changing little once (T\geq512). The same trend holds on MATH but saturates later: T"ulu-3-8B-RLVR drops from (0.844) at (T=0) to (0.311) at (T=2048), with the curve flattening only after (T\geq1024). The decline is not strictly monotonic for every model: Qwen2.5-7B-Instruct on GSM8K first rises slightly from (0.738) at (T=0) to (0.764) at (T=64) before dropping to (0.088). In contrast, on the harder MathArena benchmark longer reasoning barely helps: RVR stays close to (1) across all budgets for all three models (e.g., T"ulu-3-8B-RLVR remains between (0.979) and (0.996)), and only Qwen2.5-7B-Instruct shows a modest reduction, to (0.896) at (T=2048).

These results indicate that an appropriate reasoning-token budget can improve the rationality of LLM reasoning, while avoiding unnecessary token cost.
However, simply extending the reasoning length is insufficient for harder deployment reasoning tasks.}

%% file: sections/Lim.tex
\section*{Limitations}

\paragraph{Rationality over final answers vs. trajectories.} In this work, we define rational value risk over final answers, which is the standard evaluation target in preference alignment, mathematical reasoning, and code generation. This allows rational value risk to be measured using existing outcome-based verifiers, such as preference judges, exact-match verifiers, unit tests, or symbolic checkers. We acknowledge that an alternative formulation could define this measure directly in terms of reasoning paths or intermediate reasoning steps. Process-level rationality evaluates the validity and efficiency of the reasoning trajectory toward the final answer, and may reveal failures that final-answer evaluation does not capture.

However, defining and verifying utility over reasoning processes introduces additional challenges, including comparing multiple reasoning paths, assessing partially correct intermediate steps, and evaluating latent or unfaithful reasoning traces. Therefore, we leave process-level rationality outside the scope of this work and focus on final-answer rationality as a broadly applicable and empirically measurable setting.

\paragraph{New algorithms for improved rationality.} 
Another limitation of this work is that it does not propose a new inference or training algorithm for reducing rational value risk. Instead, our goal is to provide a formal definition, estimator, and empirical diagnosis of irrationality in LLM reasoning. This makes the framework primarily evaluative rather than prescriptive: it identifies when and where a model's reasoning strategy deviates from its rational counterpart, but does not directly provide a method for improvement.

Nevertheless, this work suggests several directions for future applications. Rational value risk can be utilised as an objective for designing inference-time algorithms, such as verifier-guided search, adaptive sampling, self-consistency, or compute allocation strategies that explicitly improve the expected utility of deployed reasoning. It may also inform post-training by distinguishing value misalignment from irrational reasoning under a well-aligned value function. In addition, this measure can be used as a diagnostic tool for comparing models, reasoning strategies, and deployment settings.

{
\paragraph{Human verification.}
Our evaluation of conversational tasks relies on LLM-based verifiers, without human annotations, which may limit the strength of the evaluation evidence. Since conversational tasks do not have ground-truth answers, human verification may provide stronger evidence for assessing verifier reliability. However, large-scale human evaluation is resource-intensive and difficult to scale, and is therefore out of the scope of this work. 

\paragraph{Proprietary models.}
The experiments for H1 include proprietary models, which may limit reproducibility. In contrast, all models used for the experiments on verifiable reasoning tasks in H2-H4 are open-weight models, enabling reproducible evaluation in these settings.
}

%\fh{let's discuss this}

%We note three scope limitations. 
%Our experiments focus on the 7B-14B parameter regime, where measurement is feasible under our compute budget; whether the same patterns persist at the 70B+ scale is left for future work. The Tülu-3 trajectory is a specific instantiation of post-training; conclusions drawn from it should be interpreted as illustrative rather than universally representative of all alignment pipelines. The comparison between Qwen-2.5-7B-Instruct and DeepSeek-R1-Distill-Qwen-7B confounds reasoning training with distillation effects more broadly; we therefore use this comparison as supplementary evidence alongside the controlled inference-procedure experiments on a fixed $\pi_\theta$.

\section*{Ethics considerations}
\label{sec:ethics}

This work is fundamental research. All experiments use publicly available datasets and benchmarks; no human subjects or sensitive data are involved. No direct negative societal impacts are identified.

\section*{Acknowledgements}

KQ was supported in part by the UKRI Grant EP/Y03516X/1 for the UKRI Centre for Doctoral Training in Machine Learning Systems (\href{https://mlsystems.uk/}{https://mlsystems.uk/}).

%% file: appendices/Proofs.tex
\onecolumn
\section{Notation}

\begin{table}[h!]
\centering
\renewcommand{\arraystretch}{1.2}
\caption{Notation}
\begin{tabularx}{0.71\textwidth}{@{} c X @{}}
\toprule
\multicolumn{1}{c}{\textbf{Symbol}} &
\multicolumn{1}{c}{\textbf{Description}} \\
\midrule

$\mathcal{X}$
& Space of input prompts \\

$\mathcal{V}$
& Finite vocabulary \\

$\mathcal{Z}$
& Reasoning space, defined as
$\mathcal{Z}=\bigcup_{T\geq1}\mathcal{V}^{T}$ \\

$T$
& Length of a reasoning path \\

$z$
& Reasoning path, $z=(z_1,\ldots,z_T)$ \\

$\pi_\theta$
& Frozen language model parameterised by $\theta\in\Theta$ \\

$\theta$
& Parameters of the language model \\

$\Theta$
& Space of language model parameters \\

$\mathcal{Y}$
& Answer space \\

$g$
& Extraction function,
$g:\mathcal{Z}\rightarrow\mathcal{Y}$ \\

$D$
& Set of reasoning problems,
$D=\{\mathbf{x}_i\}_{i=1}^{M}$ \\

$\mathbf{x}_i$
& Reasoning problem,
$\mathbf{x}_i=(x_i,y_i^+)$ \\

$x_i$
& Input question \\

$y_i^+$
& Preferred (or verifiable) answer \\

$M$
& Number of reasoning problems \\

$d_\theta$
& Reasoning strategy,
$d_\theta(\cdot\mid x_i)
\triangleq d(\pi_\theta(\cdot\mid x_i))$ \\

$y_i$
& Answer extracted from the generated reasoning path,
$y_i=g(z_i)$ \\

$\mathcal{O}$
& Outcome space \\

$P$
& Outcome distribution of the verifier,
$P:\mathcal{X}\times\mathcal{Y}\times\mathcal{Y}
\rightarrow\Delta(\mathcal{O})$ \\

$o_i$
& Evaluation outcome,
$o_i\sim P(\cdot\mid\mathbf{x}_i,y_i)$ \\

$f$
& Deterministic verifier,
$f:\mathcal{X}\times\mathcal{Y}\times\mathcal{Y}
\rightarrow\mathcal{O}$ \\

$U$
& Utility function defined on the outcome space,
$U:\mathcal{O}\rightarrow\mathbb{R}$ with
$U(o_i)\in[0,1]$ \\

$K$
& Number of sampled reasoning responses for each reasoning problem \\

$L$
& Number of verifier evaluations for each sampled response \\

$\mathcal{R}(d_\theta)$
& Rational value risk of reasoning strategy $d_\theta$ \\

$\mathcal{R}_M(d_\theta)$
& Rational value risk based on $M$ sampled reasoning problems \\

$\overline{\mathcal{R}}_{M,K}(d_\theta)$
& Rational value risk based on $M$ sampled reasoning problems and
$K$ sampled reasoning responses for each problem \\

$\widehat{\mathcal{R}}_{M,K,L}(d_\theta)$
& Estimated rational value risk based on $M$ reasoning problems,
$K$ sampled reasoning responses, and $L$ verifier evaluations \\
\bottomrule
\end{tabularx}
\end{table}

\newpage

\section{Proofs}
\label{app:proofs}

This appendix collects all proofs omitted from the main text.

\subsection{Additional definitions}

For notational brevity, we first define the expected utility of an answer $y$ to a reasoning problem $\mathbf{x}$ as:
\[
V(\mathbf{x},y)
\triangleq
\mathbb{E}_{o\sim P(\cdot\mid\mathbf{x},y)}[U(o)].
\]
Throughout this section, we assume \(U(o)\in[0,1]\), and therefore
\(V(\mathbf{x},y)\in[0,1]\).

For a given prompt \(\mathbf{x}\), we define the expected utility of rational reasoning strategy $d_{\theta^\circ}$,
\[
V^\circ(\mathbf{x})
\triangleq
\mathbb{E}_{y^\circ\sim d_{\theta^\circ}(\cdot\mid x)}
V(\mathbf{x},y^\circ),
\]
and that of the deployed reasoning strategy $d_\theta$,
\[
V_\theta(\mathbf{x})
\triangleq
\mathbb{E}_{y\sim d_\theta(\cdot\mid x)}
V(\mathbf{x},y).
\]
Then, the rational value risk $\mathcal{R}(d_\theta)$ can be written as
\[
\mathcal{R}(d_\theta)
=
\mathbb{E}_{\mathbf{x}\sim\rho}
\left[
V^\circ(\mathbf{x})-V_\theta(\mathbf{x})
\right].
\]

Given \(M\) sampled prompts, we define
\[
\mathcal{R}_M(d_\theta)
=
\frac1M
\sum_{i=1}^M
\left[
V^\circ(\mathbf{x}_i)
-
V_\theta(\mathbf{x}_i)
\right].
\]

Given \(K\) sampled answers from the deployed reasoning strategy and its rational counterpart for every prompt, we define
\[
\overline{\mathcal{R}}_{M,K}(d_\theta)
=
\frac{1}{MK}
\sum_{i=1}^M\sum_{k=1}^K
\left[
V(\mathbf{x}_i,y^\circ_{i,k})
-
V(\mathbf{x}_i,y_{i,k})
\right],
\]
where
\(
y^\circ_{i,k}
\overset{\mathrm{iid}}{\sim}
d_{\theta^\circ}(\cdot\mid x_i),
\quad
y_{i,k}
\overset{\mathrm{iid}}{\sim}
d_\theta(\cdot\mid x_i).
\)

Based on the definition~\ref{def:empirical-rvr}, 
the estimated rational value risk is
\[
\widehat{\mathcal{R}}_{M,K,L}(d_\theta)
=
\frac{1}{MK}
\sum_{i=1}^M\sum_{k=1}^K
\left[
\widehat U_L(\mathbf{x}_i,y^\circ_{i,k})
-
\widehat U_L(\mathbf{x}_i,y_{i,k})
\right].
\]

\subsection{Proof of Lemma~\ref{lem:rvr-error-decomposition}}

\begin{proof}
Adding and subtracting
\(\mathcal{R}_M(d_\theta)\) and
\(\overline{\mathcal{R}}_{M,K}(d_\theta)\) gives
\[
\begin{aligned}
\mathcal{R}(d_\theta)
-
\widehat{\mathcal{R}}_{M,K,L}(d_\theta)
=&
\mathcal{R}(d_\theta)
-
\mathcal{R}_M(d_\theta)
+
\mathcal{R}_M(d_\theta)
-
\overline{\mathcal{R}}_{M,K}(d_\theta)
\\
&+
\overline{\mathcal{R}}_{M,K}(d_\theta)
-
\widehat{\mathcal{R}}_{M,K,L}(d_\theta).
\end{aligned}
\]
This proves the claim.
\end{proof}

\subsection{Proof of Lemma~\ref{lem:prompt-concentration}}

\begin{proof}
For each sampled prompt and its preferred answer $\mathbf{x}_i$, we define
\[
G_i
\triangleq
V^\circ(\mathbf{x}_i)
-
V_\theta(\mathbf{x}_i),
\] which measures the difference between the expected utility of the rational reasoning strategy $d_{\theta^\circ}$ and that of the deployed reasoning strategy $d_\theta$.

Since
$
V^\circ(\mathbf{x}_i),V_\theta(\mathbf{x}_i)\in[0,1],
$
their difference is bounded as
\(
G_i\in[-1,1].
\)
Moreover, \(G_1,\ldots,G_M\) are iid because
\(\mathbf{x}_1,\ldots,\mathbf{x}_M\overset{\mathrm{iid}}{\sim}\rho\).

because the reasoning problems are sampled independently as
$
\mathbf{x}_1,\ldots,\mathbf{x}_M
\overset{\mathrm{iid}}{\sim}\rho,
$
the random variables $G_1,\ldots,G_M$ are independently and identically distributed as well.

By definition, we have
\[
\mathbb{E}[G_i]
=
\mathcal{R}(d_\theta),
\qquad
\mathcal{R}_M(d_\theta)
=
\frac1M\sum_{i=1}^M G_i.
\]
Based on the Hoeffding's inequality \cite{hoeffding1963}, for any $\epsilon \in (0,1)$, their absolute difference can be bounded by
\begin{equation}
\label{eq1}
\Pr\left(
\left|
\mathcal{R}_M(d_\theta)
-
\mathcal{R}(d_\theta)
\right|
\ge \epsilon
\right)
\le
2\exp\left(
-\frac{M\epsilon^2}{2}
\right).
\end{equation}

For any $\delta \in (0,1)$, we set $\delta$ to the right-hand side of equation (\ref{eq1}),
\[
2\exp\left(-\frac{M\epsilon^2}{2}\right) = \delta.
\]
Then, we have
\[
\epsilon
=
\sqrt{\frac{2\log(2/\delta)}{M}}.
\]
Therefore, with probability at least \(1-\delta\),
\[
\left|
\mathcal{R}_M(d_\theta)
-
\mathcal{R}(d_\theta)
\right|
\le
\sqrt{\frac{2\log(2/\delta)}{M}},
\]
which completes the proof.

\end{proof}

\subsection{Proof of Lemma~\ref{lem:reasoning-concentration}}

\begin{proof}
Given the sampled reasoning problems
\(\mathbf{x}_1,\ldots,\mathbf{x}_M\), we first define
\[
Z_{i,k}
\triangleq
V(\mathbf{x}_i,y^\circ_{i,k})
-
V(\mathbf{x}_i,y_{i,k}).
\]
Since \(V(\mathbf{x},y)\in[0,1]\), their difference can be bounded as 
\(
Z_{i,k}\in[-1,1].
\)

Then, the reasoning answers from rational reasoning strategy $d_{\theta^\circ}$ and deployed reasoning strategy $d_\theta$ are independent and we have
\begin{align*}
\mathbb{E}[Z_{i,k}\mid\mathbf{x}_i]
&=
\mathbb{E}_{y^\circ\sim d_{\theta^\circ}(\cdot\mid x_i)}
V(\mathbf{x}_i,y^\circ)-
\mathbb{E}_{y\sim d_\theta(\cdot\mid x_i)}
V(\mathbf{x}_i,y)
=
V^\circ(\mathbf{x}_i)-V_\theta(\mathbf{x}_i).
\end{align*}

Consequently,
\[
\frac1{MK}
\sum_{i=1}^M\sum_{k=1}^K
\mathbb{E}[Z_{i,k}\mid\mathbf{x}_i]
=
\mathcal{R}_M(d_\theta),
\]
while
\[
\frac1{MK}
\sum_{i=1}^M\sum_{k=1}^K
Z_{i,k}
=
\overline{\mathcal{R}}_{M,K}(d_\theta).
\]

Applying Hoeffding's inequality \cite{hoeffding1963} conditionally on the $M$ reasoning problems $\mathbf{x}_{1:M}$, we obtains
\begin{equation}
\label{eq2}
\Pr\left(
\left|
\overline{\mathcal{R}}_{M,K}(d_\theta)
-
\mathcal{R}_M(d_\theta)
\right|
\ge\epsilon
\middle|
\mathbf{x}_{1:M}
\right)
\le
2\exp\left(
-\frac{MK\epsilon^2}{2}
\right).
\end{equation}
Setting the right-hand side of equation (\ref{eq2}) to \(\delta\), we have
\[
\epsilon
=
\sqrt{
\frac{2\log(2/\delta)}{MK}
}.
\]
With probability at least $1-\delta$,
\[
\left|
\overline{\mathcal{R}}_{M,K}(d_\theta)
-
\mathcal{R}_M(d_\theta)
\right| \leq \sqrt{
\frac{2\log(2/\delta)}{MK}
}
\]
This proves the result.
\end{proof}

\subsection{Proof of Lemma~\ref{lem:evaluator-estimation}}

\begin{proof}
For each sampled answer $y^\circ_{i,k}$ and $y_{i,k}$, let the outcome from rational reasoning be
$
o^\circ_{i,k,l}
\overset{\mathrm{iid}}{\sim}
P(\cdot\mid\mathbf{x}_i,y^\circ_{i,k})
$
and outcome from deployed reasoning be
$
o_{i,k,l}
\overset{\mathrm{iid}}{\sim}
P(\cdot\mid\mathbf{x}_i,y_{i,k}).
$
We define
\[
W_{i,k,l}
\triangleq
U(o^\circ_{i,k,l})
-
U(o_{i,k,l}).
\]
Since \(U(o)\in[0,1]\), it can be bounded as $
W_{i,k,l}\in[-1,1].
$

Given the sampled reasoning problems $\mathbf{x}_i$, we define the difference of expected utility between the rational and deployed reasoning answers, $y^\circ_{i,k}$, $y_{i,k}$, as
\[
\begin{aligned}
\mathbb{E}
[
W_{i,k,l}
\mid
\mathbf{x}_i,y^\circ_{i,k},y_{i,k}
]
&=
V(\mathbf{x}_i,y^\circ_{i,k})
-
V(\mathbf{x}_i,y_{i,k}).
\end{aligned}
\]

Therefore, we have
\[
\widehat{\mathcal{R}}_{M,K,L}(d_\theta)
=
\frac{1}{MKL}
\sum_{i=1}^M
\sum_{k=1}^K
\sum_{l=1}^L
W_{i,k,l},
\]
and
\[
\overline{\mathcal{R}}_{M,K}(d_\theta)
=
\frac{1}{MKL}
\sum_{i=1}^M
\sum_{k=1}^K
\sum_{l=1}^L
\mathbb{E}
[
W_{i,k,l}
\mid
\mathbf{x}_i,y^\circ_{i,k},y_{i,k}
].
\]

Using Hoeffding's inequality \cite{hoeffding1963} conditionally on the reasoning problems $\mathbf{x}_i$ and reasoning answers, $y^\circ_{i,k}$ and $y_{i,k}$, we obtain
\begin{equation}
\label{eq3}
\Pr\left(
\left|
\widehat{\mathcal{R}}_{M,K,L}(d_\theta)
-
\overline{\mathcal{R}}_{M,K}(d_\theta)
\right|
\ge\epsilon
\right)
\le
2\exp\left(
-\frac{MKL\epsilon^2}{2}
\right).
\end{equation}

Taking
$
\epsilon
=
\sqrt{
2\log(2/\delta)/MKL
},
$
the upper bound on the left-hand side of equation (\ref{eq3}) is \(\delta\). Therefore, with probability at least \(1-\delta\),
\[
\left|
\widehat{\mathcal{R}}_{M,K,L}(d_\theta)
-
\overline{\mathcal{R}}_{M,K}(d_\theta)
\right|
\le
\sqrt{
\frac{2\log(2/\delta)}{MKL}
}.
\]

If the verifier is deterministic, for every reasoning problem $\mathbf{x}$ and its deployed reasoning answer $y$, we have
$
\widehat U_L(\mathbf{x},y)
=
V(\mathbf{x},y),
$
and hence
\[
\widehat{\mathcal{R}}_{M,K,L}(d_\theta)
=
\overline{\mathcal{R}}_{M,K}(d_\theta).
\]
So the verification error is zero, which completes the proof.
\end{proof}

\subsection{Proof of Theorem~\ref{thm:rvr-one-sided}}

\begin{proof}
By Lemma~\ref{lem:rvr-error-decomposition} and the triangle inequality,
\[
\begin{aligned}
\left|
\mathcal{R}(d_\theta)
-
\widehat{\mathcal{R}}_{M,K,L}(d_\theta)
\right|
\le&
\left|
\mathcal{R}(d_\theta)
-
\mathcal{R}_M(d_\theta)
\right|
+
\left|
\mathcal{R}_M(d_\theta)
-
\overline{\mathcal{R}}_{M,K}(d_\theta)
\right|
\\
&+
\left|
\overline{\mathcal{R}}_{M,K}(d_\theta)
-
\widehat{\mathcal{R}}_{M,K,L}(d_\theta)
\right|.
\end{aligned}
\]

Apply Lemma~\ref{lem:prompt-concentration},
Lemma~\ref{lem:reasoning-concentration}, and
Lemma~\ref{lem:evaluator-estimation}, each with failure probability \(\delta/3\). By a union bound, these three events hold simultaneously with probability at least $1-\delta$. Combining the three bounds gives
\[
%\begin{aligned}
%&
\left|
\mathcal{R}(d_\theta)
-
\widehat{\mathcal{R}}_{M,K,L}(d_\theta)
\right|
\le{}
\sqrt{
\frac{2\log(6/\delta)}{M}
}
+
\sqrt{
\frac{2\log(6/\delta)}{MK}
}
+
\sqrt{
\frac{2\log(6/\delta)}{MKL}
}.
%\end{aligned}
\]
This proves the result.

If the verifier is deterministic,
Lemma~\ref{lem:evaluator-estimation} gives zero verification sampling
error, and the third term vanishes.
\end{proof}

\newpage

%% file: appendices/Experiment.tex
% =====================================================================
% Appendix
% =====================================================================

\newpage
\twocolumn

% =====================================================================
\section{Additional implementation details}
\label{app:config}

This appendix provides additional details on the experiment configurations and implementations.

\subsection{Sampling configuration}
\label{app:config:decoding}

All experiments use a common sampling interface based on vLLM.
Unless otherwise specified, we sample \(K=64\) candidate answers per prompt with
\[
(\texttt{temperature}, \texttt{top\_p}, \texttt{top\_k})
=
(1.0, 1.0, -1),
\]
where \(\texttt{top\_k}=-1\) disables top-\(k\) truncation.
This default configuration is used for H1, H2, and H4.
H3 varies the inference-time reasoning strategy by changing the sampling temperature and the self-consistency budget.
For greedy decoding, we use \(\tau=0\) with \(K=1\), since multiple greedy samples would be identical.

\Cref{tab:decoding} summarises the sampling configuration for each experiment.

\begin{table*}[t]
\centering
\small
\setlength{\tabcolsep}{4pt}
\caption{Sampling hyperparameters used in the experiments.}
\label{tab:decoding}
\begin{tabular}{lccccc p{6.8cm}}
\toprule
Experiment & \(\tau\) & top-\(p\) & top-\(k\) & \(K\) & max\_tokens & Notes \\
\midrule
H1 & 1.0 & 1.0 & \(-1\) & 64 & 1024 & 7-8B models use max\_tokens \(=512\) on AlpacaEval and UltraFeedback datasets.\\
H2 & 1.0 & 1.0 & \(-1\) & 64 & 1024 & 7-8B models use max\_tokens \(=512\) on AlpacaEval and UltraFeedback datasets. Same decoding setting across SFT, DPO, and RLVR stages.\\
H3 direct, \(\tau=0\) & 0.0 & 1.0 & \(-1\) & 1 & 1024 & Greedy decoding. \\
H3 direct, \(\tau=0.7\) & 0.7 & 1.0 & \(-1\) & 64 & 1024 & Stochastic sampling. \\
H3 direct, \(\tau=1.0\) & 1.0 & 1.0 & \(-1\) & 64 & 1024 & Default stochastic sampling. \\
H3 self-consistency & 1.0 & 1.0 & \(-1\) & 64 & 1024 & Voting budget \(n\in\{2,4,8,16,32\}\). \\
H4 & 1.0 & 1.0 & \(-1\) & 64 & \(T\) & \(T\in\{0,64,128,256,512,1024,2048\}\). \\
Self-as-verifier calls & 0.7 & 1.0 & \(-1\) & 1 & 8 & \(L=5\) verifier calls per candidate. \\
External-verifier calls & 0.7 & 1.0 & \(-1\) & 1 & 256 & \(L=3\) verifier calls per candidate; DeepSeek-v4-flash. \\
\bottomrule
\end{tabular}
\end{table*}

\begin{table*}[t]
\centering
\small
\caption{Self-as-verifier diagnostics on the UltraFeedback of H1 experiment. A-pick rate is the fraction of non-tie verifier calls in which the response in position A is selected. Krippendorff's \(\alpha\) measures consistency among the \(L=5\) repeated verifier calls for the same candidate answer. It is computed as \(1-D_o/D_e\), where \(D_o\) is the observed disagreement among verifier calls and \(D_e\) is the average disagreement under random verifier evaluations. A larger value indicates more consistent verifier evaluations.
A value close to \(0.5\) indicates small position bias.}
\label{tab:appendix_position_bias}
\begin{tabular}{lccc}
\toprule
Verifier & A-pick rate & Krippendorff's \(\alpha\) & Mean \(|\)margin\(|\) \\
\midrule
T\"ulu-3-8B-RLVR & 0.502 & 0.599 & 3.63 \\
Qwen2.5-7B-Instruct & 0.501 & 0.595 & 3.67 \\
Llama-3.1-8B-Instruct & 0.502 & 0.639 & 3.82 \\
Qwen2.5-72B-Instruct & 0.441 & 0.732 & 3.59 \\
\bottomrule
\end{tabular}
\end{table*}

\begin{table*}[t]
\centering
\small
\setlength{\tabcolsep}{4pt}
\caption{Extraction functions and their conditional-correctness rates of the GSM8K}
\label{tab:extract-fail}
\begin{tabular}{l cc cc cc cc cc r}
\toprule
\multirow{2}{*}{Model} & \multicolumn{2}{c}{\texttt{\#\#\#\# \(N\)}} & \multicolumn{2}{c}{\texttt{\textbackslash boxed\{\(N\)\}}} & \multicolumn{2}{c}{``the answer is \(N\)''} & \multicolumn{2}{c}{Last number} & \multicolumn{2}{c}{No number} & \multirow{2}{*}{\(M{\cdot}K\)} \\
\cmidrule(lr){2-3} \cmidrule(lr){4-5} \cmidrule(lr){6-7} \cmidrule(lr){8-9} \cmidrule(lr){10-11}
 & Fires & Acc. & Fires & Acc. & Fires & Acc. & Fires & Acc. & Fires & Acc. & \\
\midrule
T\"ulu-3-8B-RLVR        & 99.9 & 86.2 & \(<0.1\) & 62.5 & \textemdash & \textemdash & \(<0.1\) & 25.0 & \textemdash & \textemdash & 84416 \\
Qwen2.5-7B-Instruct   & 84.9 & 90.6 & 14.7   & 90.9 & \textemdash & \textemdash & 0.4   & 72.1 & \textemdash & \textemdash & 84416 \\
Llama-3.1-8B-Instruct &  1.6 & 74.9 & \(<0.1\) & 78.3 & 22.0 & 83.4 & 76.3  & 76.5 & \(<0.1\) & \textemdash & 84416 \\
\bottomrule
\end{tabular}
\end{table*}

\paragraph{Stop tokens.}
We do not configure additional stop tokens.
For instruct models, the corresponding chat template already includes the assistant-end token, and generation stops when the EOS token is produced.
For the T\"ulu-3 base model used in the few-shot H2 setting, no chat template is applied, and generation is controlled by \texttt{max\_tokens}.

\subsection{Prompting setup}
\label{app:config:prompts}

\paragraph{Chat-mode prompts.}
For each chat-mode model and dataset pair, we apply the model's HuggingFace chat template to a fixed system--user message pair.
The system prompt is dataset-specific and specifies the required answer format.
The user prompt contains the input question or problem.
This design keeps the prompting protocol consistent across models while allowing task-specific answer-format instructions.

~

\begin{prompt}
GSM8K system:
  Solve the following math problem step by step. After your reasoning,
  write the final numeric answer on its own line in the format:
  \#\#\#\# <answer>\\

MATH system:
  Solve the following math problem. Show your reasoning, then put your
  final answer in \boxed{}.\\

HumanEval system:
  Complete the following Python function. Return ONLY the function
  definition; do not include explanations or examples.\\

MathArena system:
  Solve the following competition math problem. Show your reasoning
  step by step, then put your final answer inside \boxed{}.\\

UltraFeedback system:  (empty string)
AlpacaEval   system:  (empty string)
  -- in both, each generator/verifier applies its own default system prompt.\\

User templates:\\
  GSM8K          : "{question}"\\
  MATH           : "{problem}"\\
  MathArena      : "{problem}"\\
  HumanEval      : "{prompt}"\\
  UltraFeedback  : "{prompt}"\\
  AlpacaEval     : "{instruction}"\\
\end{prompt}

\paragraph{Few-shot prompts for the T\"ulu-3 base model.}
The T\"ulu-3 base model does not use a chat template.
In the H2 base-stage experiments, we prepare a fixed few-shot context for the user question.
For GSM8K, we use the standard 8-shot chain-of-thought prompt from \citet{wei2022chain}.
For maths, we use a 4-shot algebra prompt that matches the algebra-only split used in our experiments.

\begin{prompt}
GSM8K 8-shot block (header lines only; full text in repo):

  Question: There are 15 trees in the grove. ...\\
  Answer:   ... The answer is 6.\\
  
  Question: If there are 3 cars in the parking lot ...\\
  Answer:   ... The answer is 5.\\
  (... 6 more shots ...)\\
  
  Question: Olivia has \$23. She bought five bagels for \$3 each. ...\\
  Answer:   ... The answer is 8.\\

MATH 4-shot block (algebra-only, header lines only):

  Problem:  Find the sum of all integer values of n such that 20/(2n-1)
            is an integer.\\
  Solution: ... The sum of these values is 1 + 0 + 3 + (-2) = \boxed{2}.\\
  
  Problem:  How many positive 3-digit integers have all digits different?\\
  Solution: ... 9 * 9 * 8 = \boxed{648}.\\
  
  Problem:  A right cylinder ... lateral surface area?\\
  Solution: ... 2 * pi * 2 * 5 = \boxed{20\pi}.\\
  
  Problem:  Compute sqrt((31)(30)(29)(28)+1).\\
  Solution: ... = \boxed{869}.
\end{prompt}

\paragraph{Verifier prompts.}
For UltraFeedback, the verifier compares the generated response with the reference response and returns a win, a tie, or a lose.
We use \(L\) verifier evaluations per candidate and return the averaged outcomes. These responses and preferred answer $y^+$ positions are randomised across calls to reduce position bias. Invalid evaluations are mapped to ties.

\begin{prompt}
System: You are evaluating two responses to a user's question. Pick
the better one. Answer with exactly one character: A if Response A is
better, B if Response B is better, T if they are equally good.\\

User:   User question:
        {x}

        --- Response A ---
        {a}

        --- Response B ---
        {b}\\
        
        Which is better? Answer with one letter (A, B, or T):
\end{prompt}

\subsection{Benchmark details}
\label{app:evaluator}

% \subsection{Verifiable reasoning tasks}

\paragraph{GSM8K.}
We extract the final numeric answer using a priority order over common answer formats, including the canonical GSM8K answer marker, boxed answers, explicit ``the answer is'' statements, and the last numeric expression in the output.
The predicted and reference answers are normalised before numerical comparison.

\paragraph{MATH and MathArena.}
We extract the predicted answer from the final expression \texttt{\textbackslash boxed\{\}} and evaluate it using symbolic equivalence checking. In H1-H3, MATH answers that fail symbolic parsing are marked incorrect. To avoid confounding reasoning-length comparisons with parsing failures or missing boxed answers in H4. For MATH in H4, we prompt the model with
\texttt{\textbackslash n\textbackslash nFinal answer: \textbackslash boxed\{}
to encourage it to directly commit a final answer in
\texttt{\textbackslash boxed\{\}}.
We then extract the final boxed answer and use a SymPy-based symbolic verifier to check whether it is mathematically equivalent to the reference answer. Outputs without a valid boxed answer are treated as unparseable. The MathArena benchmark in H1-H4 follows the same protocol.

\paragraph{HumanEval.}
Code generation tasks are evaluated by executing the generated solution against the provided unit tests in an isolated subprocess with time limits.
A candidate receives utility \(1\) only if all tests pass, and utility \(0\) otherwise.

% \subsection{Conversational tasks}

\paragraph{UltraFeedback.} We use a self-as-verifier and an external verifier to compare each generated response with the reference answer $y^+$, respectively.
The verifier returns win, tie, or lose, which is mapped to utilities \(1\), \(0.5\), and \(0\).
We use multiple verifier evaluations and average their outcomes.
The positions of generated and reference answers are randomised to reduce position bias.
\Cref{tab:appendix_position_bias} reports diagnostics for the self-as-verifier setting and shows that the A/B position bias is small.

\begin{table*}[t!]
\centering
\small
\setlength{\tabcolsep}{4pt}
\caption{MATH verify failure rates, reported as percentages of candidate answers.}
\label{tab:math-failures}
\begin{tabular}{l ccccccc r}
\toprule
Model & No \texttt{\textbackslash boxed\{\}} & Empty GT & Parse error & Parse empty & Verify exception & Incorrect & Correct & \(M{\cdot}K\) \\
\midrule
T\"ulu-3-8B-RLVR        &  4.9 & \textemdash & \textemdash & \(<0.1\) & \textemdash & 29.3 & 65.8 & 64000 \\
Qwen2.5-7B-Instruct   &  2.6 & \textemdash & \textemdash & \(<0.1\) & \textemdash &  7.7 & 89.8 & 64000 \\
Llama-3.1-8B-Instruct & 27.7 & \textemdash & \textemdash & \(<0.1\) & \textemdash & 25.2 & 47.0 & 64000 \\
\bottomrule
\end{tabular}
\end{table*}

\begin{table}[t]
\centering
\small
\caption{GPU hours for each experiment.}
\label{tab:appendix_gpu_hours}
\begin{tabular}{lcc}
\toprule
Hypothesis & Number of cells & GPU-hours \\
\midrule
H1 & 34 & 22.5 \\
H2 & 20 & 7.7 \\
H3 & 54 & 20.0 \\
H4 & 68 & 26.4 \\
\midrule
\textbf{Total} & 176 & 76.6 \\
\bottomrule
\end{tabular}
\end{table}

\begin{table*}[t]
\centering
\small
\setlength{\tabcolsep}{4pt}
\caption{Fraction of unparseable answers across models on GSM8K, MATH, and MathArena with the increased reasoning lengths $T$ in H4 experiment.}
\label{tab:appendix_D5_unparseable_by_T}
\begin{tabular}{ll rrrrrrr}
\toprule
Dataset & Model & $T{=}0$ & $T{=}64$ & $T{=}128$ & $T{=}256$ & $T{=}512$ & $T{=}1024$ & $T{=}2048$ \\
\midrule
GSM8K & T\"ulu-3-8B-RLVR & 0.00 & 0.00 & 0.00 & 0.00 & 0.00 & 0.00 & 0.00 \\
 & Qwen2.5-7B-Instruct & 0.00 & 0.00 & 0.00 & 0.00 & 0.00 & 0.00 & 0.00 \\
 & Llama-3.1-8B-Instruct & 0.18 & 0.02 & 0.00 & 0.00 & 0.00 & 0.00 & 0.00 \\
MATH & T\"ulu-3-8B-RLVR & 0.02 & 0.06 & 0.07 & 0.09 & 0.06 & 0.01 & 0.01 \\
 & Qwen2.5-7B-Instruct & 0.00 & 0.06 & 0.02 & 0.02 & 0.03 & 0.00 & 0.00 \\
 & Llama-3.1-8B-Instruct & 2.34 & 3.32 & 3.00 & 2.00 & 1.28 & 0.98 & 0.91 \\
MathArena & T\"ulu-3-8B-RLVR & 0.00 & 0.03 & 42.68 & 59.14 & 48.72 & 10.68 & 0.49 \\
 & Qwen2.5-7B-Instruct & 0.00 & 0.00 & 0.00 & 0.00 & 0.03 & 0.00 & 0.00 \\
 & Llama-3.1-8B-Instruct & 0.00 & 1.48 & 1.88 & 1.51 & 1.30 & 0.94 & 0.55 \\
\bottomrule
\end{tabular}
\end{table*}

\begin{table*}[t]
\centering
\small
\setlength{\tabcolsep}{4pt}
\caption{Effect of compute budget \(K\) on rational value risk. Each entry is the prompt mean at budget \(K\). The last column of each risk row includes the 95\% prompt-bootstrap confidence interval.}
\label{tab:appendix_saturation}
\begin{tabular}{ll rrrrrrr}
\toprule
Dataset & Model & $K{=}1$ & $K{=}2$ & $K{=}4$ & $K{=}8$ & $K{=}16$ & $K{=}32$ & $K{=}64$ \\
\midrule
GSM8K & Tülu-3-8B-RLVR & 0.142 & 0.139 & 0.139 & 0.139 & 0.137 & 0.139 & 0.139 \\
 & Qwen2.5-7B-Instruct & 0.093 & 0.096 & 0.093 & 0.093 & 0.094 & 0.093 & 0.094 \\
 & Llama-3.1-8B-Instruct & 0.219 & 0.223 & 0.218 & 0.218 & 0.219 & 0.220 & 0.220 \\
MATH & Tülu-3-8B-RLVR & 0.318 & 0.331 & 0.336 & 0.338 & 0.339 & 0.340 & 0.342 \\
 & Qwen2.5-7B-Instruct & 0.096 & 0.099 & 0.097 & 0.100 & 0.102 & 0.103 & 0.102 \\
 & Llama-3.1-8B-Instruct & 0.542 & 0.526 & 0.528 & 0.528 & 0.528 & 0.531 & 0.530 \\
HumanEval & Tülu-3-8B-RLVR & 0.537 & 0.555 & 0.602 & 0.645 & 0.638 & 0.619 & 0.602 \\
 & Qwen2.5-7B-Instruct & 0.238 & 0.238 & 0.239 & 0.242 & 0.250 & 0.245 & 0.251 \\
 & Llama-3.1-8B-Instruct & 0.524 & 0.552 & 0.546 & 0.557 & 0.563 & 0.569 & 0.569 \\
MathArena & Tülu-3-8B-RLVR & 1.000 & 1.000 & 0.992 & 0.992 & 0.994 & 0.992 & 0.992 \\
 & Qwen2.5-7B-Instruct & 0.900 & 0.900 & 0.912 & 0.915 & 0.914 & 0.915 & 0.913 \\
 & Llama-3.1-8B-Instruct & 1.000 & 1.000 & 1.000 & 0.998 & 0.998 & 0.997 & 0.997 \\
\bottomrule
\end{tabular}
\end{table*}

\subsection{Answer extraction analysis}

We further analyse answer-extraction failures for GSM8K and MATH.
For GSM8K, \Cref{tab:extract-fail} reports the used extraction function $g(\cdot)$ and the conditional correctness of each function.
The residual unparseable rate is below \(0.1\%\), indicating that the extractor covers almost all generated numeric answers.

For MATH, \Cref{tab:math-failures} shows that most failures come from missing final boxed answers, rather than symbolic parser errors.
This means that the main source of \(U=0\) is the model's failure to provide the required final-answer format or a correct answer.

{
In the H4 experiment, we specify answer formats and extraction rules for each dataset. For GSM8K, we prompt the model with
\texttt{\textbackslash n\textbackslash nFinal answer:}
and extract the answer using the following formats in order:
\begin{itemize}
    \item \texttt{\#\#\#\# N};
    \item \texttt{\textbackslash boxed\{N\}};
    \item \texttt{the answer is N};
    \item the last standalone number.
\end{itemize}

% Table \ref{tab:appendix_D5_unparseable_by_T} provides detailed unparseable outputs on GSM8K and MathArena per reasoning budget $T$. The rate are nealy zero. 
% Except for T\"ulu-3-8B-RLVR on T\"ulu-3-8B which has high unparseable rate from 128 to 512, based on the figure

Table \ref{tab:appendix_D5_unparseable_by_T} reports the rate of unparseable outputs on GSM8K, MATH and MathArena across reasoning lengths \(T\). The rates are nearly zero in most settings. One exception is T\"ulu-3-8B-RLVR on MathArena, which exhibits a relatively high unparseable rate for \(T=128\) to \(T=512\). However, this does not affect the conclusion in Figure~\ref{fig:h4_rl}, as rational value risk remains high on MathArena even at \(T=2048\).
}

\subsection{Implementations}
\label{app:config:stack}

Experiments are run with Python 3.11, vLLM 0.6.3, PyTorch 2.5.1, CUDA 12.4, and Transformers 4.45.2.
The main open-weight model experiments are run on six NVIDIA A800 GPUs with 80GB of memory.
All reported confidence intervals are 95\% prompt-bootstrap intervals with \(B=1000\) resamples.
We use a fixed seed for each headline cell and cache generations by model, dataset, sampling configuration, prompt template, and sampling budget, so repeated runs reuse the same generated candidates when the configuration is unchanged.

\subsection{Computational cost}
\label{app:config:cost}

The sampling stage requires 76.6 GPU-hours in total.
Verification and bootstrap aggregation are performed separately and are not included in the GPU-hour count.
\Cref{tab:appendix_gpu_hours} reports the GPU hours for each experiment.

% =====================================================================
\section{Effect of sample sizes on estimation error}
\label{app:calibration}

We study the estimated rational value risk along the three terms in Theorem~\ref{thm:rvr-one-sided}: the number of reasoning problems \(M\), the number of sampled reasoning responses \(K\), and the number of verifier verifications \(L\).

\subsection{Effect of reasoning problem size}

We assess prompt sampling error by bootstrapping subsets of prompts of size \(M\).
\Cref{tab:appendix_M_convergence} shows that the confidence interval decreases as \(M\) grows, which is consistent with the \(1/\sqrt{M}\) term in Theorem~\ref{thm:rvr-one-sided}. \Cref{tab:appendix_B1_dataset_sizes} reports the number of reasoning problems $M$ for each experiment and dataset.

\begin{table}[t]
\centering
\small
\caption{Bootstrap CI width under different prompt sampling numbers \(M\).}
\label{tab:appendix_M_convergence}
\begin{tabular}{rccc}
\toprule
$M$ & Tülu-3-8B & Qwen2.5-7B & Llama-3.1-8B \\
\midrule
50 & 0.197$\pm$0.082 & 0.088$\pm$0.060 & 0.266$\pm$0.082 \\
100 & 0.151$\pm$0.058 & 0.092$\pm$0.043 & 0.223$\pm$0.057 \\
200 & 0.143$\pm$0.039 & 0.092$\pm$0.028 & 0.213$\pm$0.038 \\
500 & 0.125$\pm$0.022 & 0.087$\pm$0.018 & 0.210$\pm$0.023 \\
1319 & 0.139$\pm$0.014 & 0.094$\pm$0.012 & 0.220$\pm$0.013 \\
\bottomrule
\end{tabular}
\end{table}
\label{app:calibration:M}

\begin{table}[t]
\centering
\small
\setlength{\tabcolsep}{6pt}
\caption{The number of reasoning problem $M$ per experiment and dataset. Every cell draws $K{=}64$ samples per prompt; `-' marks a dataset not run in that experiment.}
\label{tab:appendix_B1_dataset_sizes}
\begin{tabular}{l rrrr}
\toprule
Dataset & H1 & H2 & H3 & H4 \\
\midrule
AlpacaEval & 300 & 300 & 300 & - \\
UltraFeedback & 300 & 300 & 500 & - \\
\midrule
GSM8K & 1319 & 1319 & 1319 & 500 \\
MATH & 1000 & 1000 & 1000 & 500 \\
MathArena & 60 & 60 & 60 & 60 \\
HumanEval & 164 & 164 & 164 & - \\
\bottomrule
\end{tabular}
\end{table}

\begin{table*}[t!]
\centering
\small
\setlength{\tabcolsep}{5pt}
\caption{Rational value risk for each difficulty level at \(K=64\).}
\label{tab:appendix_per_difficulty}
\begin{tabular}{lll ccc}
\toprule
Dataset & Bucket & \(n\) & T\"ulu-3-8B-RLVR & Qwen2.5-7B-Instruct & Llama-3.1-8B-Instruct \\
\midrule
MATH & L1 & 113 & 0.066 & 0.009 & 0.211 \\
 & L2 & 170 & 0.174 & 0.039 & 0.350 \\
 & L3 & 226 & 0.271 & 0.049 & 0.462 \\
 & L4 & 234 & 0.350 & 0.088 & 0.591 \\
 & L5 & 257 & 0.631 & 0.245 & 0.792 \\
\midrule
HumanEval & Short & 56 & 0.571 & 0.165 & 0.462 \\
 & Medium & 53 & 0.557 & 0.250 & 0.575 \\
 & Long & 55 & 0.677 & 0.340 & 0.672 \\
\bottomrule
\end{tabular}
\end{table*}

\begin{table}[t]
\centering
\small
\caption{Effect of \(L\) on rational value risk at \(K=32\) for the H1 experiment: T\"ulu-3-RLVR on UltraFeedback. (Std. is the standard deviation.)}
\label{tab:appendix_L_sensitivity}
\begin{tabular}{cccc}
\toprule
$L$ & DEU & Std. \\
\midrule
1 & 0.558 & 0.0102 \\
2 & 0.559 & 0.0094 \\
3 & 0.560 & 0.0095 \\
4 & 0.559 & 0.0098 \\
5 & 0.559 & 0.0094 \\
\bottomrule
\end{tabular}
\end{table}

\subsection{Effect of reasoning response size}
\label{app:calibration:K}

To assess the effect of the reasoning response size, we compute the estimator at truncated budgets
\(K\in\{1,2,4,8,16,32,64\}\) using the same cached candidate set.
For each \(K\), we compute the estimated rational value risk $\widehat{\mathcal R}_{K,M,L}$ in Definition~\ref{def:empirical-rvr},
\[
\frac{1}{MK}\sum_{i=1}^{M}\sum_{k=1}^{K}\left[\widehat{U}_L(\mathbf{x}_i,y^\circ_{i,k})-\widehat{U}_L(\mathbf{x}_i,y_{i,k})\right].
\]

% \Cref{tab:appendix_saturation} reports the effect of compute budget \(K\) on rational value risk in verifiable reasoning tasks when $L=1$. It is steady from \(K=32\) to \(K=64\) across the reported settings, indicating that \(K=64\) provides a practical compute budget for estimating rational value risk.

\Cref{tab:appendix_saturation} reports the effect of reasoning response size on rational value risk in verifiable reasoning tasks with \(L=1\). The estimates are stable because \(M\) is sufficiently large, with only minor changes as \(K\) increases. In particular, the results remain nearly unchanged from \(K=32\) to \(K=64\) across the reported settings, suggesting that \(K=64\) provides a practical compute budget for estimating rational value risk.

\subsection{Effect of verifier evaluation size}
\label{app:calibration:L}

For deterministic verifiers, such as GSM8K, MATH, HumanEval, and MathArena, the verifier outcome is fixed for a given answer. So repeated verifier verifications are not required, i.e., $L=1$.

In conversational benchmarks, the number of verifier verifications $L$ may affect the estimation error of a stochastic verifier. For UltraFeedback, we average the verifiers' outcomes at different \(L\). 
\Cref{tab:appendix_L_sensitivity} shows that the rational value risk estimation is stable across \(L\), indicating that the verifier evaluation term is not the dominant source of estimation error in this setting, supported by Lemma~\ref{lem:evaluator-estimation}.

% =====================================================================

% =====================================================================
\section{Rational value risk across various task difficulties}
\label{app:per-dataset}
We provide a breakdown of difficulty for the MATH and HumanEval benchmarks to investigate the pattern of rational value risk across varying difficulty levels in Table \Cref{tab:appendix_per_difficulty}.

For MATH, we use the official difficulty levels from 1 to 5.
For HumanEval, which has no official difficulty label, we group problems by the token length of the reference solution into short, medium, and long groups.

The results show that rational value risk appears across difficulty levels. On MATH, the risk generally increases with difficulty level, especially for weaker models.
On HumanEval, the risk also tends to increase with solution length.

% =====================================================================
% \section{Failure-case analysis}
% \label{app:failures}

% We also inspect prompts where the sampled candidate set contains at least one correct answer, but the model has low average utility across samples.
% Formally, we select prompts satisfying
% \[
% \mathrm{REU}(x_i)=1
% \quad\text{and}\quad
% \mathrm{DEU}(x_i)\leq 0.1.
% \]
% These cases illustrate the core phenomenon measured by rational value risk: the model can generate a correct or high-utility answer within \(K\) samples, but the deployed reasoning strategy still concentrates on lower-utility answers.
% \Cref{tab:failure_candidates} reports the number of such prompts in H1 experiment.

% =====================================================================
\section{Statements on AI Assistants in Research and Writing}
\label{app:ai-usage}

{We used LLM assistants for language polishing, theory refinement, and code refactoring during the development of this work. All scientific claims, theoretical results, experimental results, and analyses were produced and verified by the authors; no AI-generated text was included without author review.}

% \begin{table}[t]
% \centering
% \small
% \caption{Number of failure-case prompts satisfying \(\mathrm{REU}=1\) and \(\mathrm{DEU}\leq 0.1\).}
% \label{tab:failure_candidates}
% \begin{tabular}{llr}
% \toprule
% Model & Dataset & Count \\
% \midrule
% T\"ulu-3-8B-RLVR & GSM8K & 35 / 1319 \\
% T\"ulu-3-8B-RLVR & MATH & 68 / 1000 \\
% T\"ulu-3-8B-RLVR & HumanEval & 29 / 164 \\
% Qwen2.5-7B-Instruct & GSM8K & 22 / 1319 \\
% Qwen2.5-7B-Instruct & MATH & 15 / 1000 \\
% Qwen2.5-7B-Instruct & HumanEval & 9 / 164 \\
% Llama-3.1-8B-Instruct & GSM8K & 45 / 1319 \\
% Llama-3.1-8B-Instruct & MATH & 115 / 1000 \\
% Llama-3.1-8B-Instruct & HumanEval & 23 / 164 \\
% Qwen2.5-72B-Instruct & GSM8K & 9 / 1319 \\
% Qwen2.5-72B-Instruct & MATH & 7 / 1000 \\
% Qwen2.5-72B-Instruct & HumanEval & 19 / 164 \\
% \bottomrule
% \end{tabular}
% \end{table}

\hfill